\newif\ifarxiv
\begin{document}

\input{RSS/header.sty}

\title{Failure Prediction with Statistical Guarantees \\ for Vision-Based Robot Control}

\ifarxiv
\author{
\authorblockN{Alec Farid\authorrefmark{1} \ \ \ \ David Snyder\authorrefmark{1} \ \ \ \ Allen Z. Ren \ \ \ \ Anirudha Majumdar}
\authorblockA{
% \authorrefmark{1}Equal Contribution. \\
Department of Mechanical and Aerospace Engineering, Princeton University
\\ \texttt{\{afarid, dasnyder, allen.ren, ani.majumdar\}@princeton.edu}
}}

\else
\author{Author Names Omitted for Anonymous Review. Paper-ID 195.}
\fi

% \author{\authorblockN{Michael Shell}
% \authorblockA{School of Electrical and\\Computer Engineering\\
% Georgia Institute of Technology\\
% Atlanta, Georgia 30332--0250\\
% Email: mshell@ece.gatech.edu}
% \and
% \authorblockN{Homer Simpson}
% \authorblockA{Twentieth Century Fox\\
% Springfield, USA\\
% Email: homer@thesimpsons.com}
% \and
% \authorblockN{James Kirk\\ and Montgomery Scott}
% \authorblockA{Starfleet Academy\\
% San Francisco, California 96678-2391\\
% Telephone: (800) 555--1212\\
% Fax: (888) 555--1212}}

% avoiding spaces at the end of the author lines is not a problem with
% conference papers because we don't use \thanks or \IEEEmembership

% for over three affiliations, or if they all won't fit within the width
% of the page, use this alternative format:
% 
%\author{\authorblockN{Michael Shell\authorrefmark{1},
%Homer Simpson\authorrefmark{2},
%James Kirk\authorrefmark{3}, 
%Montgomery Scott\authorrefmark{3} and
%Eldon Tyrell\authorrefmark{4}}
%\authorblockA{\authorrefmark{1}School of Electrical and Computer Engineering\\
%Georgia Institute of Technology,
%Atlanta, Georgia 30332--0250\\ Email: mshell@ece.gatech.edu}
%\authorblockA{\authorrefmark{2}Twentieth Century Fox, Springfield, USA\\
%Email: homer@thesimpsons.com}
%\authorblockA{\authorrefmark{3}Starfleet Academy, San Francisco, California 96678-2391\\
%Telephone: (800) 555--1212, Fax: (888) 555--1212}
%\authorblockA{\authorrefmark{4}Tyrell Inc., 123 Replicant Street, Los Angeles, California 90210--4321}}

\maketitle

\begin{abstract}
We are motivated by the problem of performing \emph{failure prediction} for safety-critical robotic systems with high-dimensional sensor observations (e.g., vision). Given access to a black-box control policy (e.g., in the form of a neural network) and a dataset of training environments, we present an approach for synthesizing a failure predictor with \emph{guaranteed bounds} on false-positive and false-negative errors. In order to achieve this, we utilize techniques from \emph{Probably Approximately Correct (PAC)-Bayes} generalization theory. In addition, we present novel \emph{class-conditional} bounds that allow us to trade-off the relative rates of false-positive vs. false-negative errors. We propose algorithms that train failure predictors (that take as input the history of sensor observations) by minimizing our theoretical error bounds. We demonstrate the resulting approach using extensive simulation and hardware experiments for vision-based navigation with a drone and grasping objects with a robotic manipulator equipped with a wrist-mounted RGB-D camera. These experiments illustrate the ability of our approach to (1) provide strong bounds on failure prediction error rates (that closely match empirical error rates), and (2) improve safety by predicting failures. \blfootnote{\authorrefmark{1} Equal Contribution}  % \blfootnote{\hphantom{\authorrefmark{1}} Correspondence to \texttt{\{afarid, dasnyder\}@princeton.edu}}
\end{abstract}

\IEEEpeerreviewmaketitle

\section{Introduction}
\label{sec:Intro}
How can we guarantee the safety of a control policy for a robot that operates using high-dimensional sensor observations (e.g., a vision-based navigation policy for a drone; Fig.~\ref{fig:anchor})? This is particularly challenging for policies that have learning-based components such as neural networks as part of the perception and control pipeline; state-of-the-art approaches for synthesizing such policies (e.g., based on deep reinforcement learning) do not provide guarantees on safety, and can lead to policies that fail catastrophically in novel environments. 
% \todo{Don't forget to put a real introduction here.}

Motivated by this challenge, we consider the following problem in this paper. Given access to a black-box control policy (e.g., one with neural network components), our goal is to train a \emph{failure predictor} for this policy. This failure predictor acts as a `safety layer' that is responsible for predicting (online) if the given policy will lead to a failure as the robot operates in a novel (i.e., previously unseen) environment. Such a predictor could enable the robot to deploy a backup policy (e.g., transitioning to hover) in order to ensure safety. In certain settings (e.g., factories with robotic manipulators), the triggering of a failure predictor could also allow the robot to seek help from a human supervisor. We envision that the addition of a failure predictor could substantially improve the safety of the overall robotic system.

In order to be deployed in safety-critical settings, a failure predictor should ideally have associated \emph{formal guarantees} on its prediction performance. More precisely, we would like to upper bound the \emph{false negative} and \emph{false positive} rates of the predictor. Here, a false negative corresponds to a case where the policy leads to a failure, but the failure predictor does not predict this in advance. Similarly, a false positive is a case where the failure predictor triggers, but the policy remains safe. In addition to bounding these error rates, one would also ideally like to \emph{trade-off} the relative proportion of false negatives vs. false positives. For example, ensuring a low false negative rate is generally more crucial in safety-critical settings (potentially at the cost of a higher false positive rate). 
% with high probability over the training sample, on a new environment we have a $.95$ probability of correctly predicting whether or not there will be a failure

\begin{figure}[t]
\centering
\includegraphics[width=0.45\textwidth]{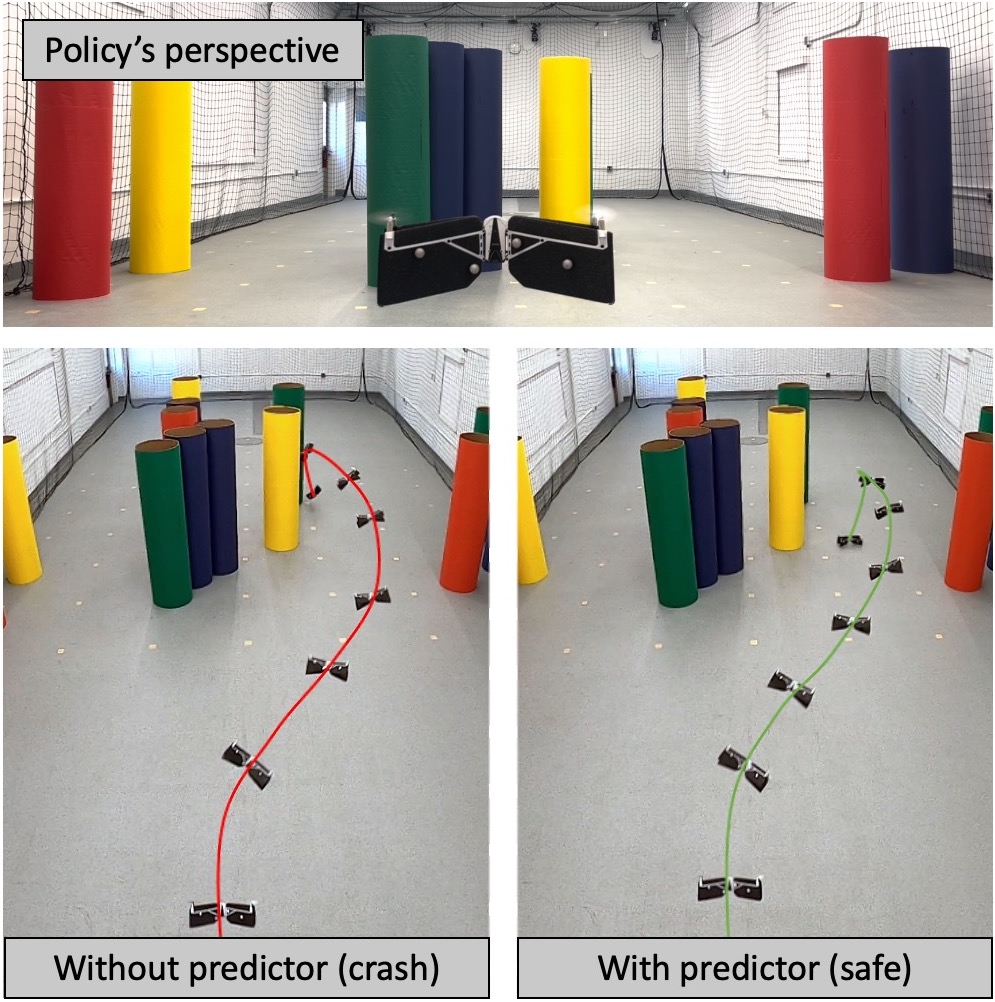}
\vspace{-5pt}
\caption{We train a failure predictor which guarantees (with high probability) detection of a failure ahead of time. 
A policy is tasked with avoiding obstacles in a novel environment using a first-person depth image (top). The bottom images show the entire environment, including multiple obstacles which are occluded from view. Since it is operating in a novel setting, the policy alone fails to avoid occluded obstacles and crashes (left). When the failure predictor is deployed, the policy stops safely before a crash (right).}
\label{fig:anchor}
\vspace{-15pt}
\end{figure}

\textbf{Statement of contributions.} Our primary contribution is to develop an approach for learning a failure predictor with guaranteed bounds on error rates, given a black-box control policy that operates on a robotic system with high-dimensional sensor observations (e.g., vision or depth). Assuming access to a dataset of environments on which we can execute the black-box policy for training purposes, 
we present a \emph{reduction} of the failure prediction problem (which involves non-i.i.d. data in the form of sensor observations) to a standard supervised learning setting based on the reduction presented in \cite{Majumdar20}. This allows us to utilize techniques from \emph{Probably Approximately Correct (PAC)-Bayes} generalization theory in order to obtain bounds on error rates. In addition, we develop novel \emph{class-conditional} bounds that allow us to trade-off the relative rates of false negative vs. false positive errors. Our algorithmic approach is then to train a failure predictor (e.g., in the form of a neural network that takes as input a history of sensor observations) by minimizing the theoretical error bounds. We demonstrate our approach on two examples in simulation and hardware including vision-based navigation with a drone and grasping objects with a robotic manipulator equipped with a wrist-mounted RGB-D camera. These experiments illustrate the ability of our approach to (1) provide strong bounds on failure prediction error rates, and (2) improve the safety of a robotic system by applying our failure prediction scheme. 

% Consider a robotic system that acts according to some fixed, black-box action policy $\pi_A$ and that is tasked with achieving some goal. Assume that for the applications of this system there exist `costly failures' - that is, failure cases for which the practical costs are significant. For example, an obstacle-avoiding MAV might sustain serious damage in the case of a physical crash. Assume as well that there exist `safety configurations' for the robotic system. To continue the drone example: for static obstacles, a policy that halts the drone and transitions to a static hover might be such a configuration. \\
% \indent This general framework motivates the development of a `failure detection' mechanism that determines when to switch to a safety configuration in order to avoid a costly, negative outcome. Specifically, such a detector would \emph{provably} accomplish the following: (1) a reduction in the incidence of serious or catastrophic failures, with (2) a minimal burden on the performance of the action policy $\pi_A$ in `non-dangerous' environments. 

\section{Related Work}
\label{sec:RW}

% Safety is a critical component of autonomous systems. Typically the first step of ensuring safety to to detect unsafe or close-to-failure state.

% Anomaly detection is a method which provides and indication of when an agent is operating in an unexpected or outlier setting. 

\textbf{Anomaly Detection.} 
There is a long line of work on \emph{anomaly detection} in the signal processing literature (see \cite{Basseville88} for a review). The goal is to identify events or inputs (e.g., sensor observations) that deviate significantly from nominal inputs (and may thus cause failures). In recent years, there have been efforts to develop anomaly detection techniques for supervised learning problems with high-dimensional inputs such as images \cite{Hendrycks17, Liang18, DeVries18, Lee18, Bitterwolf20, Chen22} (see \cite{Ruff21} for a recent review). This line of work is also closely related to the literature on uncertainty estimation in deep learning \cite{Abdar21}. Recent techniques also allow for anomaly detection in reinforcement learning settings using streams of sensor inputs \cite{Sedlmeier20, Cai20, Wu21, Greenberg21}. Within the robotics literature, there has been work on performing uncertainty estimation \cite{Kahn17, Lutjens19} or anomaly detection \cite{Richter17} specifically for the problem of collision avoidance (which is an example we consider in this paper). In contrast to our goal of failure prediction, the methods highlighted above are generally aimed at detecting anomalies/outliers in environments that the robot is operating in. While anomalies may result in failures, this is not necessarily the case (e.g., for an extremely unlikely yet benign environment). In addition, the methods above do not generally provide guarantees on false positive or negative rates of detection, or the improvement in the robot's performance that results from applying the anomaly detector. 

{\bf Failure Prediction.} A different line of work foresees failure by explicitly forward-propagating the robot dynamics in the environment. For instance, if the environment map is available (either given or estimated online), and the robot dynamics are known (e.g., car), the robot can check if it falls into a failure state in the next step before executing the action. Typical methods include Hamilton-Jacobi reachability analysis~\cite{mitchell2005time, akametalu2014reachability}, control barrier functions \cite{ames2016control}, and formal methods~\cite{alshiekh2018safe}. However, these approaches typically assume an explicit description of the uncertainty affecting the system (e.g., bound on actuation noise) and/or the environment (e.g., minimum distance between obstacles), which are often unrealistic to describe with real-world environments. They also work with low-dimensional systems, and are generally unable to provide guarantees for policies trained with rich sensing like vision. Work in safe reinforcement learning \cite{dalal2018safe, thananjeyan2021recovery} approximates the outcome of robot dynamics from data and allows high-dimensional image input; however, they do not provide guarantees on the robot's safety.

Recently, techniques from \emph{conformal prediction}~\cite{Vovk_CP_book_2005, shafer_CP_tutorial_2008} have been used to perform failure prediction with error bounds~\cite{luo_CP_safety_2021}. This method provides single-episode generalization with the milder assumption of training data exchangeability, as opposed to the i.i.d. assumption on the training data for PAC learning (see e.g. \cite{chow_Chapter7_1997} for reference). 
However, because the resulting guarantee fundamentally operates on a `centroidal measure' of the performance distribution, it does not generally provide high-probability guarantees over the predictors synthesized from the training data. A modification of conformal prediction that does give high-probability guarantees, however, is computationally intractable except for very simple settings \cite{foygel_CP_limits_2021}. We provide a comparison of our approach with conformal prediction in Sec.~\ref{subsecApp:Class}. 

% \textbf{Conformal Prediction.}
% In the statistical learning literature, inference in distribution-free settings can be undertaken through the conformal prediction procedure \cite{Vovk_CP_book_2005, shafer_CP_tutorial_2008}. This method provides single-episode generalization with the milder assumption of training data exchangeability, as opposed to the i.i.d. assumption on the training data for PAC learning (see e.g. \cite{chow_Chapter7_1997} for reference). Conformal prediction has been applied to the problem of certifying safety in autonomous driving in \citet{luo_CP_safety_2021}.
% However, because the resulting guarantee fundamentally operates on a `centroidal measure' of the performance distribution - the median - it does not generally provide high-probability guarantees over the resulting policies synthesized from the training data. A modification of conformal prediction that does give high-probability guarantees, however, is computationally intractable except for very simple settings \cite{foygel_CP_limits_2021}.

\textbf{PAC-Bayes Learning Theory.}
Generalization bounds based on PAC-Bayes theory \cite{McAllester99, Shawe-Taylor97, Seeger02} have recently been shown to provide strong guarantees for a variety of large-scale supervised learning settings \cite{Langford03, Germain09, Dziugiate17, Bartlett17, Jiang20, Perez-Ortiz20, Viallard21}. Since their original development, there has been significant work on strengthening and extending PAC-Bayes bounds \cite{Catoni04, Catoni07, McAllester13, Rivasplata19, Thiemann17, Dziugaite18}. Previous work has also extended PAC-Bayes theory in order to learn control policies for robots with guarantees on generalization to novel environments \cite{Majumdar20, Veer20, Ren20}. This work requires implementing a specific training pipeline for learning a policy (by optimizing a PAC-Bayes generalization bound). In contrast, we seek to perform failure prediction for \emph{any} given (black-box) policy. PAC-Bayes theory allows us to provide guarantees on the false positive and negative rates of failure prediction. Our method can thus be used in a `plug-and-play' manner to improve the safety of an existing policy. One recent work \cite{hsu2022sim} also improves safety of the policies trained using PAC-Bayes theory and detects failure by learning a safety value function using Hamilton-Jacobi reachability-based reinforcement learning, but it does not explicitly provide guarantees on failure prediction.

\section{Problem Formulation} 
\label{sec:Problem}

\subsection{Notation}
\label{subsecPF:Notation}
In this paper, general spaces are denoted by capitalized Greek letters and calligraphic lettering. As examples, $\Pi$ represents the space of black-box policies and $\F$ represents the space of failure prediction hypotheses. The only exception, $\D$, instead denotes probability distributions. A subscript on a distribution denotes the (assumed measurable where needed) space over which the distribution acts; e.g. $\D_\E$ denotes the distribution over the space of environments. Unless otherwise noted, samples from a distribution are always assumed to be i.i.d.. Let $\cap$ denote the joint event of a predicted class and a true class. For example, $p_{0 \cap 1}$ denotes the probability of the joint event `predict class $0$ and true class is $1$,' which is exactly the probability of false negative. \emph{Class-conditional} prediction probabilities follow the same convention. For example, $p_{0 \rvert 1}$ is the conditional probability of `predicting class 0 given the true class is 1.' The indicator function $\mathbbm{1}(x) = 1$ if $x$ is true and $\mathbbm{1}(x) = 0$ if $x$ is false.

% Functions utilize a colon to separate variables and parameters. For example, $C(r_f(E, \pi))$ refers to the cost of predictor $f$ parameterized by the particular environment $E$ and policy $\pi$. 
% \subsection{Terminology}
% \label{subsecPF:Terminology}

\subsection{Problem Setting}
\label{subsecPF:PS}

\textbf{Environment and Task Policy.} We consider a setting where environments $E$ are drawn from an underlying distribution $\D_\E$ over environments. Here, `environment' refers to factors that are external to the robot (e.g., an obstacle field for drone navigation, or objects for robotic grasping). We do not assume any knowledge of or structure on $\D_\E$, and only assume indirect access to the distribution via a dataset $S = \{E_1, E_2, ..., E_N\}$ of i.i.d. samples from $\D_\E$. At test time, we must perform failure prediction in new environments drawn from $\D_\E$. In all environments, we assume the robot has a sensor that provides an observation $o \in \mathcal{O}$ (e.g., RGB-D image) at each time step. We consider a deterministic, black-box task policy $\pi: \mathcal{O} \rightarrow \mathcal{A}$ that maps (potentially a history of) observations to a control input. Deploying the policy in an environment induces a label $y \in \{0, 1\}$ indicating the robot's failure (e.g., $0$ if it reaches a target and $1$ if not).

% Each environment induces the true label $y$ via the rollout function, and a predicted label $\hat{y}$ from $f \sim \mathcal{D}_\mathcal{F}$. 

\textbf{Failure Predictor.} Given a fixed policy, our goal is to train a failure predictor using the dataset $S$ of environments. The predictor hypothesis $f: \mathcal{O}^H \rightarrow \mathcal{Y}$ maps a fixed horizon of past observations to a predicted class $\hat{y} \in \{0, 1\}$. We consider that the predictor has two possible outputs: class $0$ for predicting success and class $1$ for predicting failure. Thus there are four possible outcomes: (1) true positive ($1 \cap 1$), predicting $1$ \emph{at least once before failure}; (2) true negative ($0 \cap 0$), \emph{never} predicting $1$ \emph{during the entire successful rollout}; (3) false positive ($1 \cap 0$), incorrectly predicting $1$ at least once during a successful rollout; and (4) false negative ($0 \cap 1$), incorrectly predicting $0$ \emph{at all steps of a failed rollout}. The rates of the four outcomes sum to 1: $p_{1 \cap 1} + p_{0 \cap 0} + p_{1 \cap 0} + p_{0 \cap 1} = 1$. The \emph{misclassification error} refers to the sum of the rate of false positive and false negative, $p_{1 \cap 0} + p_{0 \cap 1}$. In addition, we define \emph{false positive rate} (FPR) as the ratio between false positive and sum of false positive and true negative, $\text{FPR} := p_{1 \cap 0} / (p_{1 \cap 0} + p_{0 \cap 0}) = p_{1 \rvert 0}$; similarly, \emph{false negative rate} (FNR) is the ratio between false negative and sum of false negative and true positive, $\text{FNR} := p_{0 \cap 1} / (p_{0 \cap 1} + p_{1 \cap 1}) = p_{0 \rvert 1}$. 

% From this, if we do predict a failure, we can switch to a conservative policy. Thus, this guarantee ensures (with high probability over the training) that there won't be more than $5\%$ failures in practice. For concreteness, 

Let $r_f: \E \times \Pi \rightarrow \mathcal{X}^{T} \times \mathcal{Y}^{T}$ denote the function that `rolls out' the system with the given policy and the predictor $f$ for $T$ steps, i.e., $r_f$ maps an environment $E$ to the trajectory of states $x_t$ (resulting from applying the policy) and failure predictions $\hat{y}_t$, for $t \in \{1,\dots,T\}$. To evaluate the performance of the failure predictor, we introduce the error of applying the predictor $f$ in an environment $E$ where a policy $\pi$ is running:
\begin{equation} \label{Eq:CostNominal}
    C(r_f(E, \pi)) := \mathbbm{1}[(\max_{t < T_\text{fail}} \hat{y}_t) \ \neq y],
\end{equation}
where $T_\text{fail}$ is the step when failure occurs; if the whole rollout is successful, $T_\text{fail} = T+1$. This \emph{misclassfication error} describes if the prediction is misclassified (false positive or false negative). As shown in Sec.~\ref{subsecApp:Toy}, it is often useful to consider the \emph{class-conditional misclassfication error} that allows us to trade-off false positives and false negatives:
\begin{equation} \label{Eq:CostClass}
    \tilde{C}(r_f(E, \pi)) := \sum_{y \in \{0,1\}} \lambda_y \mathbbm{1}[(\max_{t < T_\text{fail}} \hat{y}_t) \ \neq y],
\end{equation}
where $\lambda_y$ weighs the relative importance of false positives and false negatives (e.g., $\lambda_0$ = 0.3 and $\lambda_1$ = 0.7 considers false negatives to be more costly).

\textbf{Goal.} Our goal is to use the training environments $S$ to learn failure predictors that minimize the errors and \emph{provably generalize} to unseen environments drawn from the distribution $\mathcal{D}_\E$. We also employ a slightly more general formulation where a \emph{distribution} $\mathcal{D}_\mathcal{F}$ over predictors $f$ instead of a single predictor is used. Below we present the two optimization problems corresponding to misclassification error and class-conditional misclassification error:
% Under this definition, we can compute the failure predictor's output at any time step. 
% The cost function and optimization for the nominal setting is presented in Equation \ref{Eq:CostMinNominal}. The class-conditional equivalent is presented in Equation \ref{Eq:CostMinClass}
% Our nominal problem formulation is to choose a distribution $\mathcal{D}_\mathcal{F}$ that minimizes the expected number of total misclassification errors (i.e., false positives plus false negatives):
\begin{align} 
    \inf_{\D_\F} \underset{E \sim \D_\E}{\EE} \underset{f \sim \D_\F}{\EE} \big[C(r_f(E, \pi)) \big], \label{Eq:CostMinNominal}\\
     \inf_{\D_\F} \underset{E \sim \D_\E}{\EE} \underset{f \sim \D_\F}{\EE} \big[\tilde{C}(r_f(E, \pi)) \big]. \label{Eq:CostMinClass}
\end{align}

We emphasize that we do not have direct access to the distribution $\D_\E$ over environments. %that the robot will operate in. 
We must thus train failure predictors that will \emph{generalize} beyond the finite training dataset $S$ of environments we assume access to.

\section{Failure Prediction with Guaranteed Error Bounds} \label{sec:Pac-Bayes}
Our approach for learning failure predictors with guaranteed error bounds relies on a reduction to results from the PAC-Bayes generalization theory from supervised learning; in order to present this reduction, we first provide a brief background of PAC-Bayes in the standard supervised learning setting \cite{Dziugiate17}.

\subsection{PAC-Bayes Theory for Supervised Learning}
\label{subsecApp:Background}
Consider $\mathcal{Z}$ the input space, and $\D_\mathcal{Z}$ the (unknown) true distribution on $\mathcal{Z}$. We assume access to $S_\mathcal{Z} = \{z_1, z_2, ..., z_N\}$, $N$ i.i.d. samples of data $z_i \in \mathcal{Z}$ from $\D_\mathcal{Z}$. To each $z_i$ there corresponds a class label $y_i \in \{0, 1\}$. Let $\mathcal{H}$ be a class of hypotheses consisting of functions $h: \mathcal{Z} \rightarrow \{0, 1\}$, parameterized by some $w \in \mathcal{W} \subseteq \mathbb{R}^d$ (e.g., neural networks parameterized by weights $w$). Consider a loss function $l: \mathcal{H} \times \mathcal{Z} \rightarrow [0, 1]$. Now let $\mathcal{W}$ denote the space of probability distributions on the parameter space $\mathbb{R}^d$. We assume there is a \emph{`prior' distribution} $\D_{\mathcal{W},0} \in \mathcal{W}$ before observing the $N$ samples; and afterwards, we choose a \emph{posterior distribution} $\D_\mathcal{W} \in \mathcal{W}$. Denote the training loss of the posterior $\D_\mathcal{W}$ as:
\begin{equation}
    l_S(\D_\mathcal{W}) := \frac{1}{N} \sum_{z \in S} \underset{w \sim \D_\mathcal{W}}{\EE} \big[l(w;z) \big].
\end{equation}
The following theorem allows us to bound the true expected loss (over unknown data from $D_\mathcal{Z}$) achieved by any $\D_\mathcal{W}$.
% $l_{\mathcal{D}_\mathcal{Z}}(\D_\mathcal{W})$ 
\begin{theorem}[Supervised Learning PAC-Bayes Bound \cite{McAllester99}]

\noindent For any $\delta \in (0, 1)$ and `prior' distribution $\D_{\mathcal{W},0}$ over $w$, the following inequality holds with probability at least $1-\delta$ over training sets $S \sim \mathcal{D}_\mathcal{Z}^N$ for all posterior distributions $\D_\mathcal{W}$:
\begin{equation}
    \begin{split}
        \underset{z \sim \D_\mathcal{Z}}{\EE} \underset{w \sim \D_\mathcal{W}}{\EE} & \big[l(w; z) \big] \leq  l_S(\D_\mathcal{W}) + R(\D_\mathcal{W}, \D_{\mathcal{W},0}, \delta), \\
        R(\D_\mathcal{W}, \D_{\mathcal{W},0}, \delta) & = \sqrt{\frac{\KL(\D_\mathcal{W} \rVert \D_{\mathcal{W},0}) + \log(\frac{2\sqrt{N}}{\delta})}{2N}} \ ,
    \end{split}
\end{equation}
where $\KL(\cdot \| \cdot)$ is the Kullback-Leibler (KL) divergence.  
\end{theorem}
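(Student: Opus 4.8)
The plan is to follow the classical route for McAllester-type bounds: a change-of-measure step that transfers the data-dependent posterior $\D_\mathcal{W}$ onto the data-independent prior $\D_{\mathcal{W},0}$, followed by Markov's inequality applied to a single exponential-moment random variable. Throughout I would write $L(w) := \EE_{z\sim\D_\mathcal{Z}}[l(w;z)]$ for the true risk of $w$ and $\hat{l}_S(w) := \frac{1}{N}\sum_{z\in S} l(w;z)$ for its empirical risk, and set $\Delta_S(w) := L(w) - \hat{l}_S(w)$. The quantity the theorem controls, $\EE_{z\sim\D_\mathcal{Z}}\EE_{w\sim\D_\mathcal{W}}[l(w;z)] - l_S(\D_\mathcal{W})$, equals $\EE_{w\sim\D_\mathcal{W}}[\Delta_S(w)]$ by linearity of expectation.

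First I would invoke the change-of-measure (Donsker--Varadhan) inequality: for any fixed $S$ and any measurable $\phi$, $\EE_{w\sim\D_\mathcal{W}}[\phi(w)] \le \KL(\D_\mathcal{W} \,\|\, \D_{\mathcal{W},0}) + \log \EE_{w\sim\D_{\mathcal{W},0}}[e^{\phi(w)}]$. Applying this with $\phi(w) = 2N\,\Delta_S(w)^2$ and combining with Jensen's inequality $2N\big(\EE_{w\sim\D_\mathcal{W}}[\Delta_S(w)]\big)^2 \le \EE_{w\sim\D_\mathcal{W}}[2N\,\Delta_S(w)^2]$ gives
\begin{equation*}
2N\big(\EE_{w\sim\D_\mathcal{W}}[\Delta_S(w)]\big)^2 \;\le\; \KL(\D_\mathcal{W} \,\|\, \D_{\mathcal{W},0}) + \log \EE_{w\sim\D_{\mathcal{W},0}}\!\big[e^{2N\,\Delta_S(w)^2}\big].
\end{equation*}
Crucially this holds simultaneously for every posterior $\D_\mathcal{W}$, since the right-most term depends only on $S$ and on the prior — so no union bound over hypotheses is needed, which is precisely what makes the PAC-Bayes statement ``for all $\D_\mathcal{W}$'' attainable.

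The technical heart is the claim $\EE_{S\sim\D_\mathcal{Z}^N}\,\EE_{w\sim\D_{\mathcal{W},0}}\!\big[e^{2N\,\Delta_S(w)^2}\big] \le 2\sqrt{N}$. By Fubini (the prior is independent of $S$) it suffices to show $\EE_{S}\big[e^{2N\,\Delta_S(w)^2}\big]\le 2\sqrt{N}$ for each fixed $w$. Since $l(\cdot;z)\in[0,1]$, one-sided Hoeffding gives $\PP_S[\Delta_S(w)\ge t]\le e^{-2Nt^2}$; integrating this tail bound against the law of $\Delta_S(w)^2$ — equivalently, applying Maurer's inequality for the binomial exponential moment and then Pinsker's inequality to lower bound the binary KL by $2\,\Delta_S(w)^2$ — yields the constant $2\sqrt{N}$. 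I expect this exponential-moment estimate to be the main obstacle: the entire shape of the final bound, and in particular the exact factor $2\sqrt{N}$ inside the logarithm, is dictated by how tightly this step is executed, and the constant is easy to lose track of.

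Finally I would apply Markov's inequality to the nonnegative random variable $X_S := \EE_{w\sim\D_{\mathcal{W},0}}[e^{2N\,\Delta_S(w)^2}]$: with probability at least $1-\delta$ over $S$, $X_S \le \EE_S[X_S]/\delta \le 2\sqrt{N}/\delta$. On that event the displayed inequality becomes $2N\big(\EE_{w\sim\D_\mathcal{W}}[\Delta_S(w)]\big)^2 \le \KL(\D_\mathcal{W}\,\|\,\D_{\mathcal{W},0}) + \log(2\sqrt{N}/\delta)$ for all $\D_\mathcal{W}$; taking square roots — the conclusion being vacuous when $\EE_{w\sim\D_\mathcal{W}}[\Delta_S(w)]\le 0$, since the right-hand side is nonnegative — and rearranging recovers exactly the stated bound with the claimed $R(\D_\mathcal{W},\D_{\mathcal{W},0},\delta)$.
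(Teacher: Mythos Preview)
The paper does not prove Theorem~1 at all: it is stated as background and attributed to \cite{McAllester99}, with no argument given. So there is nothing in the paper to compare your proposal against.

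That said, your sketch is the standard route (Donsker--Varadhan change of measure, an exponential-moment bound under the prior, then Markov), and it is essentially correct. One small imprecision worth flagging: the two descriptions you give for the exponential-moment step are not equivalent. Directly integrating the Hoeffding tail $\PP[\lvert\Delta_S(w)\rvert\ge t]\le 2e^{-2Nt^2}$ against the density of $e^{2N\Delta_S(w)^2}$ only yields $\EE_S[e^{2N\Delta_S(w)^2}]\le 1+4N$ (the integral $\int_1^{e^{2N}} 2/y\,dy$), which would produce $\log((1+4N)/\delta)$ rather than $\log(2\sqrt N/\delta)$ inside $R$. The sharper constant $2\sqrt N$ that matches the statement comes from your second route: Maurer's lemma $\EE_S\big[e^{N\,\mathrm{kl}(\hat{l}_S(w)\Vert L(w))}\big]\le 2\sqrt N$ for $[0,1]$-valued losses, combined with Pinsker's inequality $\mathrm{kl}(p\Vert q)\ge 2(p-q)^2$. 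Apart from conflating these two, your plan is sound and recovers exactly the stated bound.
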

\noindent Note that the prior distribution $\D_{\mathcal{W},0}$ may be chosen arbitrarily, but random or poorly chosen priors may negatively affect the guarantee. We elaborate on how we select the prior (and posterior) in Sec. \ref{sec:AlgorithmicApproach}.

\subsection{Bound on Misclassification Error}
\label{subsecApp:Nominal}
% The bound follows by applying the results from Sec.~\ref{subsecApp:Background} to \eqref{Eq:CostMinNominal}. 

We now present a result that tackles \eqref{Eq:CostMinNominal} for learning failure predictors with bounds on the expected misclassification error. Assume there is a prior distribution $D_{\mathcal{F},0}$ on the failure predictors, and then denote the training error of the posterior $D_\mathcal{F}$ on the training dataset $S$ of environments as:
\begin{equation}
    C_S(\D_\mathcal{F}) := \frac{1}{N} \sum_{E \in S} \underset{f \sim \D_\mathcal{F}}{\EE} \big[C(r_f(E, \pi)) \big].
\end{equation}
% $\bar{C}$ the empirical mean cost of failure prediction in environments in the dataset $S$. 
% The nominal bound follows by directly applying the results from Section \ref{subsecApp:Background} to Equation \ref{Eq:CostMinNominal} (again, we denote by $\bar{C}$ the empirical mean cost of the environments in the sample $S$).
Analogous to results in Sec.\ref{subsecApp:Background}, we present the following theorem that upper bounds the true expected misclassification error.
\begin{theorem}
[PAC-Bayes Bound on Misclassification Error]

\noindent For any $\delta \in (0, 1)$ and any prior distribution over failure predictors $\D_{\F,0}$, the following inequality holds with probability at least $1-\delta$ over training sets $S \sim \D_\E^N$ for all posterior distributions $\D_\F$:
\begin{equation}
\label{eq:pacbayes_bound}
    \begin{split}
        \underset{E \sim \D_\E}{\EE}\underset{f \sim \D_\F}{\EE}  \big[C(r_f(E, \pi)) \big] \leq C_S(\D_\mathcal{F}) + R(\D_\F, \D_{\F,0}, \delta) , \\
        R(\D_\F, \D_{\F,0}, \delta) = \sqrt{\frac{\KL(\D_\F \rVert \D_{\F,0}) + \log(\frac{2\sqrt{N}}{\delta})}{2N}}.
    \end{split}
\end{equation}
\end{theorem}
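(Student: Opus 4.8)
The plan is to derive this bound as a direct corollary of the Supervised Learning PAC-Bayes Bound (Theorem~1), using the reduction of failure prediction to supervised learning from \cite{Majumdar20}. The only real content is the choice of identification: the ``sample'' in the supervised-learning template is \emph{not} an individual sensor observation (observations within a rollout are temporally correlated and hence not i.i.d.), but rather an entire environment $E$, and the per-sample bounded loss $l(w;z)$ is played by the rollout misclassification cost $C(r_f(E,\pi))$ from \eqref{Eq:CostNominal}.

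Concretely, first I would instantiate the setup of Sec.~\ref{subsecApp:Background} with input space $\Z := \E$, unknown data distribution $\D_\Z := \D_\E$, and i.i.d. sample $S_\Z := S = \{E_1,\dots,E_N\} \sim \D_\E^N$ (i.i.d. by assumption). The hypothesis class $\H$ is taken to be $\F$, with the same parameter space $\mathcal{W}$, so that the ``prior'' $\D_{\F,0}$ and ``posterior'' $\D_\F$ are exactly distributions over $\mathcal{W}$ as in the template. Define the loss by $l(f;E) := C(r_f(E,\pi))$. I would then check the two hypotheses Theorem~1 requires. (i) $l$ is a well-defined deterministic function of $(f,E)$: since $\pi$ is deterministic and, for fixed predictor weights, $r_f(\cdot,\pi)$ is a deterministic map from an environment to its state/prediction trajectory (and the induced true label $y$ and failure time $T_\text{fail}$ are likewise deterministic functions of $E$), $C(r_f(E,\pi))$ depends only on $(f,E)$, just as $l(w;z)$ depends only on $(w,z)$. (ii) Boundedness: $C$ in \eqref{Eq:CostNominal} is an indicator, so $l$ takes values in $\{0,1\}\subseteq[0,1]$. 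Under this identification, the empirical loss $l_S(\D_\mathcal{W})$ of the template becomes exactly $C_S(\D_\F)$, and the population loss $\EE_{z\sim\D_\Z}\EE_{w\sim\D_\mathcal{W}}[l(w;z)]$ becomes exactly the objective $\EE_{E\sim\D_\E}\EE_{f\sim\D_\F}[C(r_f(E,\pi))]$ of \eqref{Eq:CostMinNominal}.

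With the identification in place, the conclusion is immediate: applying Theorem~1 verbatim gives, for every $\delta\in(0,1)$ and every prior $\D_{\F,0}$, with probability at least $1-\delta$ over $S\sim\D_\E^N$ and simultaneously for all posteriors $\D_\F$,
\[
\EE_{E\sim\D_\E}\EE_{f\sim\D_\F}\big[C(r_f(E,\pi))\big] \;\le\; C_S(\D_\F) + R(\D_\F,\D_{\F,0},\delta),
\]
with $R$ the same $\sqrt{(\KL + \log(2\sqrt{N}/\delta))/(2N)}$ term (the constants carry over unchanged because $N=|S|$ and $\delta$ play identical roles). This is precisely \eqref{eq:pacbayes_bound}.

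The main obstacle is conceptual rather than computational: one must be confident that the non-i.i.d., temporal structure of sensor histories \emph{within} a rollout does not invalidate the use of Theorem~1. The resolution is that all such structure is absorbed into the fixed deterministic map $r_f$; the PAC-Bayes sample complexity is taken over \emph{environments}, which are i.i.d. by assumption, and the loss touches the within-episode dynamics only through that map. I would also flag that the $[0,1]$-boundedness of $C$ is exactly what lets us invoke Theorem~1 off the shelf — for the class-conditional cost $\tilde{C}$ of \eqref{Eq:CostClass} one would additionally need $\lambda_0+\lambda_1\le 1$ (or a rescaling), but for the nominal misclassification cost considered here no such adjustment is required.
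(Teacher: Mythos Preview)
Your proposal is correct and follows essentially the same approach as the paper: the paper's proof consists of the reduction table identifying $z\leftrightarrow E$, $h_w\leftrightarrow r_f$, $l(w;z)\leftrightarrow C(r_f(E,\pi))$ and an appeal to Theorem~1, together with a remark that the per-rollout error collapses the non-i.i.d. trajectory into a single i.i.d. datum. Your write-up is somewhat more explicit in checking determinism and $[0,1]$-boundedness, but the underlying argument is identical.
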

\begin{proof}
We apply a reduction of failure prediction to the supervised learning setting, from which the results follow immediately. An outline of the reduction is presented in Table \ref{Table:NominalProof}. See \cite{Majumdar20} for a similar proof construction. 

\begin{table}[ht]
\footnotesize
    \centering
    \caption{Reduction of Failure Prediction to Supervised Learning in PAC-Bayes Theory \label{Table:NominalProof}}
    \begin{tabular}{ccc}
        \toprule
        Supervised Learning & $\leftarrow$ & Failure Prediction \\
        \midrule
        Input Data $z \in \Z$ & 
          & Environment $E \in \E$  \\
        Hypothesis $h_w: \mathcal{Z} \rightarrow \Z'$ & & Rollout $r_f: \E \times \Pi \rightarrow \X^T \times \mathcal{Y}^T$\\
        Loss $l(w; z)$ & & Error $C(r_f(E, \pi))$ \\
        \bottomrule
    \end{tabular}
    \label{tab:reduction}
\end{table}

\textbf{Remark.} It is worth noting that the use of trajectories does not violate the i.i.d. data assumption required for PAC-Bayes theory. While, for example, the states visited along the multi-step trajectory are not i.i.d., the error and guarantee are defined for each \emph{rollout}, and thus \emph{are i.i.d.} Our guarantee holds over the course of the entire rollout; each trajectory `collapses' into a single `datum,' and the independence of these `data' (each environment/rollout) is preserved.
\end{proof}

In training, we try to minimize the right-hand side of the PAC-Bayes bound in order to find a posterior distribution $\D_\F$ that provides a tight bound on the expected error $\EE_{f \sim \D_\F} C(r_f(E, \pi))$. This corresponds to the probability that the predictor will be incorrect when deployed in a new environment from the unknown distribution $\D_\E$. Thus, on a new environment, the resulting $\D_\F$ will only make an unsuccessful prediction with probability less than the upper bound; a small upper bound guarantees a small probability of incorrect prediction. 

While this approach provides a clean upper bound on the misclassification error of the failure predictor, for certain practical cases, this can be insufficient. In the next subsection we construct a simple example to illustrate that misclassification error can lead to poor prediction. The natural solution to this problem is to use a weighted (class-conditional) misclassification error and bound FNR and FPR instead (Sec.~\ref{subsecApp:Class}).

\subsection{Motivating Class-Conditional Misclassification Error}
\label{subsecApp:Toy}

\begin{figure}[t]
\centering
\includegraphics[width=0.40\textwidth]{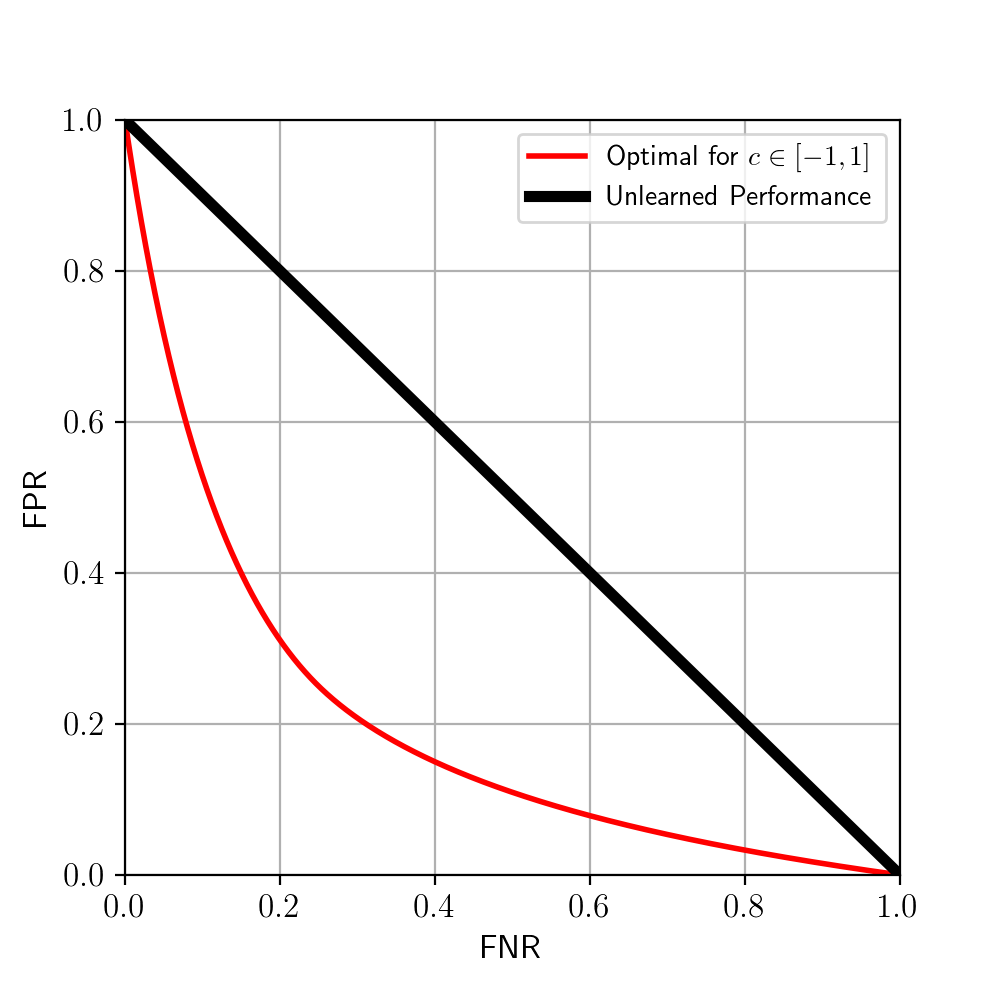}
\vspace{-10pt}
\caption{Continuum of optimal predictors (red) for varying class population ratios (successes and failures). The failure predictor can perform badly (i.e., on the bottom part of the curve) even with infinite data.}
\label{Fig:ToyROC}
\vspace{-10pt}
\end{figure}

Consider a single-step setting where the true state $x$ is the sum of the observation $o$ and an unobserved noise $\epsilon$ associated with an environment $E \sim \mathcal{D}_\E$. The observation and noise are drawn uniformly on the interval $[-1, 1]$ (i.e., $o, \epsilon \sim \mathcal{U}(-1, 1)$). Define the true success-failure label $y_i$ for a policy $\pi$ and environments $E_i$ as:
\begin{equation}\label{Eq:ToyLabel}
    y_i = \mathbbm{1}\big[[x_i \triangleq o_i + \epsilon_i] \geq c\big],
\end{equation}
% Specifically, here the horizon $T=1$,  and there is a single observation $o$ for each environment
where $c \in [-2, 2]$ is a known failure threshold.
% For simplicity, we assume that for all $i$ 
% Let , and assume the failure predictor is aware of such distributions.  
This threshold allows us to vary the expected fraction of $1$-labels (failures) from the underlying distribution $\D_\E$. If the predictor is aware of the uncertainty in observation and noise, then the optimal predictor in expectation under Eq.~\eqref{Eq:CostNominal} is,
%  In the case that $c$ is known, t
\begin{equation*}
    \hat{y}^* = f^*(o) = \mathbbm{1}[o \geq c].
\end{equation*}

However, this implies that for the case $\lvert c \rvert \geq 1$, the optimal prediction is always the more-likely class, since $o \in [-1, 1]$. But even for $\lvert c \rvert = 1$, the less-likely class can still occur up to 12.5\% of the time (see Appendix~\ref{SubsecAppendix:Toy} for calculations). Then the predictor always predicts success and misses all failures; the misclassification error is $0.125$. This shows the misclassification error biases the failure predictor towards the more common class. If that class is benign, the predictor under-detects failures.

This simple problem motivates us to instead minimize the \emph{conditional probability of misclassification} of the predictor, which is the \emph{weighted} sum of the FNR and FPR. Fig.~\ref{Fig:ToyROC} shows conditional failure probabilities of the optimal predictors (red curve) as the parameter $c$ varies in $[-1, 1]$. Although varying $c$ only changes the true proportion of classes 0 and 1 among training samples, the conditional performance of the predictor can degrade considerably depending on $c$. Even with infinite data, the optimal predictor will \emph{never} predict a failure when at least 87.5\% of training samples are successes. For analytical details regarding the policy and predictor, see Appendix~\ref{SubsecAppendix:Toy}.

\subsection{Bound on Class-Conditional Misclassification Error}
\label{subsecApp:Class}
The example above shows that minimizing the total misclassification error can fail to perform well when the relative importance of each class does not match its prevalence in the data (i.e., failures are important but rare). % Naive modifications such as training solely on either failures or successes would result in trivial predictors  --- the optimal prediction would simply be always failure or success (at the top left and bottom right corners of Fig.~\ref{Fig:ToyROC}). 
Our direct approach to generalize the method therefore utilizes a weighted misclassification error. Interpreting the total misclassification error as the class-weighted sum of conditional error probabilities, the following shows the desired generalization to an arbitrary weight $\lambda \in [0, 1]$:
\begin{equation} \label{Eq:Nominal2Class}
    \begin{split}
        p_\text{error} & = p_{0 \cap 1} + p_{1 \cap 0} \\
        & = p_{0 \rvert 1} p_1 + p_{1 \rvert 0}p_0 \\
        & = p_{0 \rvert 1} (1-\lambda^*) + p_{1 \rvert 0}(\lambda^*) \\
        \rightarrow \text{ generalize to } & = p_{0 \rvert 1} (1-\lambda) + p_{1 \rvert 0}(\lambda),
        % \implies & \inf_{\D_\F} \hspace{2mm} \big\{p_{0 \rvert 1} \lambda + p_{1 \rvert 0}(1-\lambda)\big\}.
    \end{split}
\end{equation}
% , which is closer to the quantity of interest: the conditional probabilities of misclassification. 

where $p_0$ and $p_1$ are true proportions of successes and failures in environments from $\D_\E$. Since the true FNR and FPR, $p_{0 \rvert 1}$ and $p_{1 \rvert 0}$, are unknown, we use the empirical FNR and FPR, $\hat{p}_{0 \rvert 1}$ and $\hat{p}_{1 \rvert 0}$, present in the dataset $S$ instead and propose the following error function as the empirical mean over environments in $S$:
\begin{align}\label{Eq:ClassCost}
% \begin{split}
%     \hat{C}(r_f(E, \pi), S) = & \frac{\lambda}{\hat{p}_0} \mathbbm{1}[\max_{t<T_\text{fail}} (\hat{y}_t=1) \cap y=0] \\
%     & + \frac{1-\lambda}{\hat{p}_1} \mathbbm{1}[\max_{t<T_\text{fail}} (\hat{y}_t=0) \cap y=1] 
%     \end{split} \\
    %\begin{split}
    \hat{C}_S(r_f(E, \pi), S) % & \triangleq \frac{1}{N}\sum_{E \in S} \hat{C}(r_f(E, \pi), S) \\
    \triangleq (1-\lambda)\hat{p}_{0 \rvert 1} + \lambda \hat{p}_{1 \rvert 0}.
    %\end{split}
\end{align}
However, these empirical proportions are random variables over the draw of $S$, and thus $\hat{C}_S(r_f(E, \pi), S)$ will not admit a PAC-Bayes bound. Instead, we need an alternate error function that holds \emph{with high probability} over dataset $S$ and is a provably tight over-approximation of \eqref{Eq:ClassCost}. Defining $\underline{p}_i$ to be high-probability lower bounds for $p_i$ (found via Bernstein's Inequality), the necessary error function is \eqref{Eq:Theorem3Proof}. 
\begin{equation}
\label{Eq:Theorem3Proof}
    \begin{split}
        \tilde{C}(r_f(E, \pi)) & = \frac{\lambda}{C_\lambda \underline{p}_0} \mathbbm{1}[(\max_{t<T_\text{fail}} \hat{y}_t) =1 \cap y=0] \\
        & + \frac{1-\lambda}{C_\lambda \underline{p}_1} \mathbbm{1}[(\max_{t<T_\text{fail}} \hat{y}_t) =0 \cap y=1], \\
        \text{where } C_\lambda & \triangleq  \frac{\lambda}{\underline{p}_0} + 
        \frac{1-\lambda}{\underline{p}_1}.
    \end{split}
\end{equation}
% where $\hat{p}_i$ is the proportion of successes and failures among environments in the dataset $S$. 
% This empirical mean error over $S$ is precisely the weighted sum of the conditional probabilities attained by the policy on the dataset. However, the function relies on $\hat{p}_i$ and \emph{cannot} be used to make a valid guarantee. 

The cost function is bounded in $[0, 1]$ and does not rely on quantities that are random variables over the draw of $S$. Now, we can present the class-conditional bound. 
\begin{theorem}
[Class-Conditioned PAC-Bayes Bound]
\label{ClassTheorem}

\noindent For any $\delta \in (0, 1)$ and any prior distribution $\D_{\F,0}$ over failure predictors, the following inequality holds with probability at least $1-\delta$ over training samples $S \sim \D_\E^N$ for all posterior distributions $\D_\F$:
\begin{equation} \label{Eq:TheoremThree}
    \begin{split}
        \underset{E \sim \D_\E}{\EE}\underset{f \sim \D_\F}{\EE}  \big[\tilde{C}(r_f(E, \pi)) \big]  \leq \hat{C}_S(r_f(E, \pi), S) + R_\lambda, \\
        R_\lambda = \frac{5}{3}\sqrt{\frac{(1-\underline{p})\log{\frac{2}{\delta}}}{N\underline{p}}} + C_\lambda R(\D_\F, \D_{\F,0}, \delta). 
        % R & = K(N) + C_\lambda \sqrt{\frac{\KL(\D_\F \rVert \D_{\F,0}) + \log(\frac{2\sqrt{N}}{\delta})}{2N}}, \\
        % K(N) & = \frac{5}{3}\sqrt{\frac{(1-\underline{p})\log{\frac{2}{\delta}}}{N\underline{p}}}.
    \end{split}
\end{equation}
A key result is that the regularizer $R_\lambda$ remains $\tilde{O}(\frac{1}{\sqrt{N}})$. 
\end{theorem}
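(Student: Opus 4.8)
The plan is to reduce Theorem~\ref{ClassTheorem} to the PAC-Bayes bound already proved in Section~\ref{subsecApp:Nominal}, applied to the surrogate loss $\tilde C$ of~\eqref{Eq:Theorem3Proof}, and then to pay separately for the two features that distinguish $\tilde C$ from a generic bounded loss: the class-conditional reweighting by $\lambda/(C_\lambda\underline p_0)$ and $(1-\lambda)/(C_\lambda\underline p_1)$, and the fact that these weights themselves depend on $S$ through the Bernstein lower bounds $\underline p_0,\underline p_1$. The first thing to check is that for any realization of $\underline p_0,\underline p_1\in(0,1]$ the two indicator events in~\eqref{Eq:Theorem3Proof} are mutually exclusive (a rollout is either a success or a failure), so $\tilde C$ takes only the values $0$, $\lambda/(C_\lambda\underline p_0)$, or $(1-\lambda)/(C_\lambda\underline p_1)$; since $C_\lambda\underline p_0\ge\lambda$ and $C_\lambda\underline p_1\ge1-\lambda$ by definition of $C_\lambda$, each value lies in $[0,1]$. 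This is exactly what the normalization by $C_\lambda$ buys, and it is why the final regularizer inherits a factor $C_\lambda$ once that reweighting is undone.

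Next I would apply Bernstein's inequality to the i.i.d. Bernoulli variables $\mathbbm{1}[y_i=0]$ and $\mathbbm{1}[y_i=1]$ over the $N$ training environments to get, on an event of probability at least $1-\delta_1$ (a union bound over the two classes furnishing the $\log(2/\delta)$ in $R_\lambda$), that simultaneously $\underline p_0\le p_0$ and $\underline p_1\le p_1$, with the confidence half-width of order $\sqrt{(1-\underline p)\log(2/\delta)/(N\underline p)}$ where $\underline p\triangleq\min\{\underline p_0,\underline p_1\}$; carrying the Bernstein constants through the standard inversion yields the $\tfrac53$ prefactor. This event is also what makes $\tilde C$ a genuine over-approximation of the conditional error in~\eqref{Eq:ClassCost}: on it, $p_{j\cap k}/\underline p_k\ge p_{j\mid k}$, hence $C_\lambda\,\EE_E\EE_f[\tilde C]\ge(1-\lambda)p_{0\mid1}+\lambda p_{1\mid0}$.

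On the complement of a further $\delta_2$-probability event I would invoke the reduced PAC-Bayes bound of Section~\ref{subsecApp:Nominal} for the bounded loss $\tilde C$. The only care needed is the $S$-dependence of $\underline p_0,\underline p_1$: this can be dispatched by a union bound over the at most $(N+1)^2$ values the pair can take (which enlarges only the logarithmic term, not the $1/\sqrt N$ rate), or equivalently by applying the PAC-Bayes bound to the two $S$-independent indicator losses $\mathbbm{1}[\mathrm{FP}]$ and $\mathbbm{1}[\mathrm{FN}]$ and recombining with the post-hoc weights — the rescaling needed to accommodate weights that can exceed $1$ is what attaches the factor $C_\lambda$ to $R(\D_\F,\D_{\F,0},\delta)$. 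This gives $\EE_E\EE_f[\tilde C]\le\frac1N\sum_{E\in S}\EE_f[\tilde C]+C_\lambda R(\D_\F,\D_{\F,0},\delta)$, where the empirical average equals $\frac{\lambda}{C_\lambda\underline p_0}\hat p_{1\cap0}+\frac{1-\lambda}{C_\lambda\underline p_1}\hat p_{0\cap1}$. Writing $\hat p_{j\cap k}=\hat p_{j\mid k}\hat p_k$, using the Step-2 Bernstein inequality to bound each $\hat p_k$ against $\underline p_k$ plus a half-width, and using $C_\lambda\ge1$, I would collapse this average to $\hat C_S(r_f(E,\pi),S)+\tfrac53\sqrt{(1-\underline p)\log(2/\delta)/(N\underline p)}$; combining the pieces (splitting the failure probability between the two events, up to a constant adjustment of $\delta$ inside the logarithms) produces exactly~\eqref{Eq:TheoremThree}. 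The rate claim is then immediate: $R(\D_\F,\D_{\F,0},\delta)=\tilde O(1/\sqrt N)$ from its closed form and the Bernstein term is $O(\sqrt{\log(1/\delta)/(N\underline p)})$, so with $\underline p$ and $C_\lambda$ being $O(1)$ once the class proportions are bounded away from $0$, $R_\lambda=\tilde O(1/\sqrt N)$.

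\textbf{Main obstacle.} Neither the PAC-Bayes step nor Bernstein's inequality is itself the difficulty — both are used essentially off the shelf. The delicate part is the reconciliation in the third step: arranging the interlocking $1/\underline p_k$ and $1/C_\lambda$ factors so that the empirical term collapses precisely to $\hat C_S$ (with no residual constant multiplying the leading term) while all slack is pushed into a single $\tilde O(1/\sqrt N)$ regularizer, and doing so without the $S$-dependence of the surrogate loss invalidating the PAC-Bayes guarantee.
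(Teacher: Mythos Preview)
Your proposal is correct and follows essentially the same approach as the paper: a Bernstein lemma giving the multiplicative bound $\hat p_i/\underline p_i\le 1+1/\underline K_i$ (with $\underline K_i=\tfrac{3}{5}\sqrt{N\underline p_i/(2\log(2/\delta))}$), PAC-Bayes applied to the $[0,1]$-valued surrogate $\tilde C$, and then collapsing the empirical surrogate average to $\hat C_S+1/(2\underline K)$ via the Bernstein ratio together with $\hat C_S\le 1/2$. Your explicit handling of the $S$-dependence of $\tilde C$ (either by a union bound over the finitely many values of $(\underline p_0,\underline p_1)$ or by applying PAC-Bayes to the two $S$-free indicator losses and recombining) is in fact more careful than the paper's own proof, which invokes PAC-Bayes on $\tilde C$ directly without commenting on this point.
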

\begin{proof}
A detailed proof is deferred to Appendix \ref{SubsecAppendix:CCB}. In summary, we utilize a union bound to ensure that all necessary events hold with high probability, and then combine the new error functions into the PAC-Bayes framework to obtain the new `regularizer' term $R_\lambda$. 
\end{proof}
% The general procedure is four steps. \textbf{(1)} We lower bound the true proportion of classes 0 and 1 induced by $\pi$ on $\D_\E$ by applying Bernstein's Inequality to means of Bernoulli random variables, yielding high-probability lower bounds $\underline{p}_0$, $\underline{p}_1$. \textbf{(2)} We use these lower bounds to generate a modified error function \eqref{Eq:Theorem3Proof} that satisfies necessary independence requirements for PAC-Bayesian guarantees. This error function is bounded in $[0, 1]$ and does not rely on the full dataset $S$ to calculate the error on an environment $E$.

\textbf{Remark.} This framework achieves two principal benefits. First, the bounds meaningfully hold for settings where failures and successes are not balanced in the sample dataset. Additionally, for a given confidence level, only a fixed number of instances of the less-common class is necessary to achieve a bound. This reflects the intuition that the fundamental information constraint is the number of instances of the class that have been seen. 
% the scaling of requisite dataset size $N$ is inverse only with the minimal probability $\min\{p_0, p_1\}$; this means that 

\textbf{Remark.} Without the class-conditional bound, estimating the conditional accuracy of the detector (e.g., `what fraction of failures are being predicted?') can only be done loosely (dividing $p_{1 \cap 1}$ by $p_1$, the latter of which is assumed to be small). Further, without the class-conditional objective, the predictor itself will be optimized to predict the more common class; that is, the objective function will implicitly be biased against its intended application (finding failures).

\begin{figure}[h!]
\vspace{-5pt}
\centering
\includegraphics[width=0.42\textwidth]{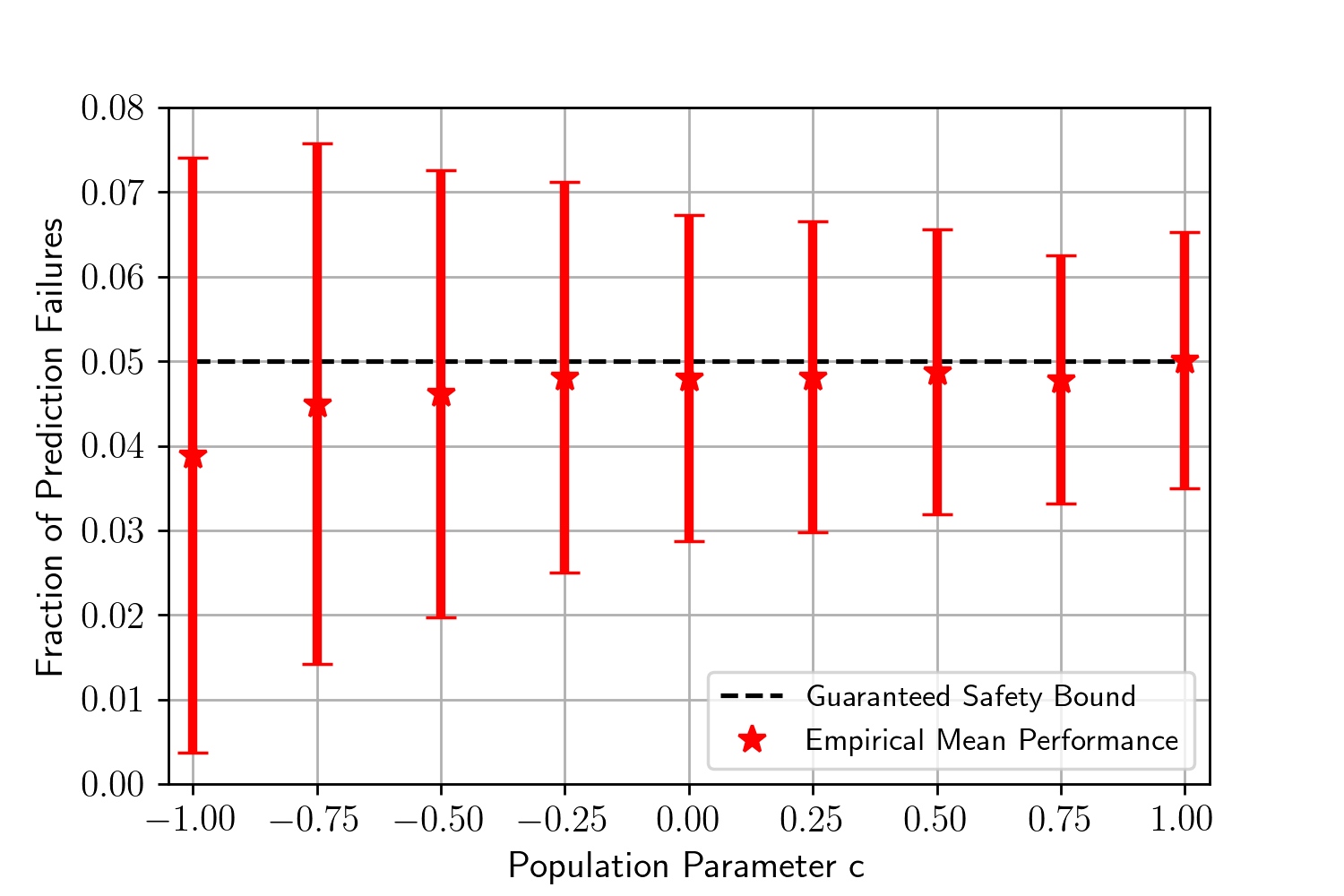}
\vspace{-5pt}
\caption{Results of conformal prediction on the toy problem in Sec. \ref{subsecApp:Toy}, which hold in \emph{expectation} over training samples $S$ and a test case (red dots). Empirical performance is better than the safety guarantee of $\epsilon^* = 0.05$ (black dashed line), as expected. However, the variance over predictors illustrated in the $1$-$\sigma$ error bars indicates that many predictors fail the guaranteed safety bound.}
\label{CP_Fig}
\vspace{-3mm}
\end{figure}

\textbf{Comparison to Conformal Prediction.}
Recently, techniques based on \emph{conformal prediction} have been proposed for learning failure predictors with error bounds \cite{luo_CP_safety_2021}. Here, we illustrate the fundamental differences between the guarantees provided by conformal learning and our PAC-Bayes approach. The conformal prediction framework allows for guarantees with better sample efficiency ($\tilde{O}(1/N)$, compared to $\tilde{O}(\sqrt{1/N})$ for PAC-Bayes bounds) that hold in expectation over both the test sample and training datasets. PAC-Bayes-style guarantees, on the other hand, hold in expectation over the test sample but \emph{with high probability} over training datasets. 

The expected performance of conformal prediction, shown in the red dots of Fig.~\ref{CP_Fig} meets the guaranteed bound (dashed black line) at $\tilde{O}(1/N)$ sample complexity. However, many of the synthesized failure predictors (the decision rules learned by conformal prediction using different training datasets) do not meet the desired safety levels of performance. The $1$-$\sigma$ error bars in Fig.~\ref{CP_Fig} represent variance over the predictors, indicating that many predictors themselves will violate the guaranteed safety bound. Thus, if the policy designer has access to a single training dataset to learn a failure predictor, conformal prediction does not guarantee that the expected error of the resulting policy will be below the desired threshold. This highlights a key difference between the approaches: the fraction of `bad' decision rules can be made arbitrarily small in the PAC-Bayes approach by reducing the parameter $\delta$; in conformal prediction, reducing the fraction of `bad' predictors is non-trivial.

\subsection{Algorithmic Implementation}
\label{sec:AlgorithmicApproach}
In the experimental setups, we parameterize failure predictors using neural networks and specify distributions over failure predictors using multivariate Gaussian distributions over the weights of the networks. We set the covariance matrix to be diagonal; thus, we can write the prior as $\D_{\F, 0}  = \mathcal{N}(\mu_0, \text{diag}(s_0))$, and posterior as $\D_\F = \mathcal{N}(\mu, \text{diag}(s))$. We train the posterior $\D_\F$ by optimizing the bounds on error rates provided by our theory. 
We present the training algorithm along with further implementation details in Appendix~\ref{subsec:expdetails}. Before training the posterior, we first train the prior $\D_{\F, 0}$ on held-out data to improve performance and resulting guarantees. After training is complete, we can use the posterior to compute PAC-Bayes bounds on error rates. The PAC-Bayes bound on the misclassification error can be computed directly with \eqref{eq:pacbayes_bound}. To compute PAC-Bayes bounds on the FNR and FPR, we can use $\lambda = 0$ and $\lambda = 1$ respectively in \eqref{Eq:TheoremThree}.

\section{Experimental Results} 
\label{sec:Experiments}

Through extensive simulation and hardware experiments, we aim to demonstrate strong guarantees on  (class-conditional) misclassification error of trained predictors. We also validate the guarantees by evaluating the predictors on test environments in both simulation and on hardware.

\subsection{Obstacle Avoidance with a Drone}
\label{subsecER:Drone}

\textbf{Overview.} In this example, we train a failure predictor for a drone executing an obstacle-avoidance policy. The failure predictor uses depth images taken from the perspective of the drone. We train the failure predictor in simulation and apply it on a hardware platform with a Parrot Swing drone (Fig.~\ref{fig:anchor}); this is an agile quadrotor/fixed-wing hybrid drone that is capable of vertical takeoff/landing, and hovering in place. We consider a failure to be any instance in which the drone collides with an obstacle. If the failure predictor indicates that a crash will occur, we trigger an emergency policy that causes the drone to stop and land safely. In this section, we evaluate our bounds and failure prediction performance in simulation and also present extensive hardware expriments. 

\textbf{Policies and Environment Distributions.} The drone's obstacle avoidance policy consists of a deep neural network (DNN) classifier which uses a depth image to select one out of a set of open-loop motion primitives for the drone to execute. We use a Vicon motion capture system to locate all obstacles in the environment as well as the drone's position to generate artificial depth images from the perspective of the drone. It is important to note that neither the policy nor the failure predictor has access to any other information about the environment besides the first-person depth images. 

In order to match our simulation setup to the hardware system, we generate a set of motion primitives for the drone by applying sequences of open-loop control actions on the hardware platform. We then record the resulting trajectories using a Vicon motion capture system and use these trajectories for our simulation experiments (in order to emulate the hardware implementation). Thus these motion primitives capture noise in the hardware dynamics and help to bridge the sim-to-real gap.
% This eliminates the need for accurate system identification and results in effective sim-to-real transfer.

We use two distinct policies and environment distributions in order to demonstrate our approach. In the first (`standard') setting, obstacles are placed (uniformly) randomly in the environment; the drone's policy then selects from a set of motion primitives which cause the drone to travel in mostly straight paths (Fig.~\ref{fig:trajectories} left). In the second (`occluded') setting, the environment is generated in two stages. First, a number of obstacles are (uniformly) randomly placed. Second, we generate additional obstacles that are placed exclusively in locations that are occluded by obstacles generated in the first stage. The drone's policy then uses the depth image captured from its starting location to select a motion primitive from the set shown in Fig.~\ref{fig:trajectories} (right). These primitives cause the drone to travel in curved paths (which may intersect initially-occluded obstacles). Since obstacles in this setting are specifically placed in portions of the environment that are occluded, this setting is adversarial in nature and leads to a large number of failures. The policy for the standard setting results in a failure rate of $25.3\%$, while the policy for the occluded setting results in a failure rate of $51.4\%$. Our approach allows for the use of a black-box policy which is not necessarily deployed in settings where it was trained nor where it would perform well. We aim to examine the improvement in safety of the policy with the addition of the failure predictor; thus, we test in settings that are challenging and even adversarial to the policy.

\begin{figure}[t]
\vspace{-5pt}
\begin{center}
    \includegraphics[width=0.45\textwidth]{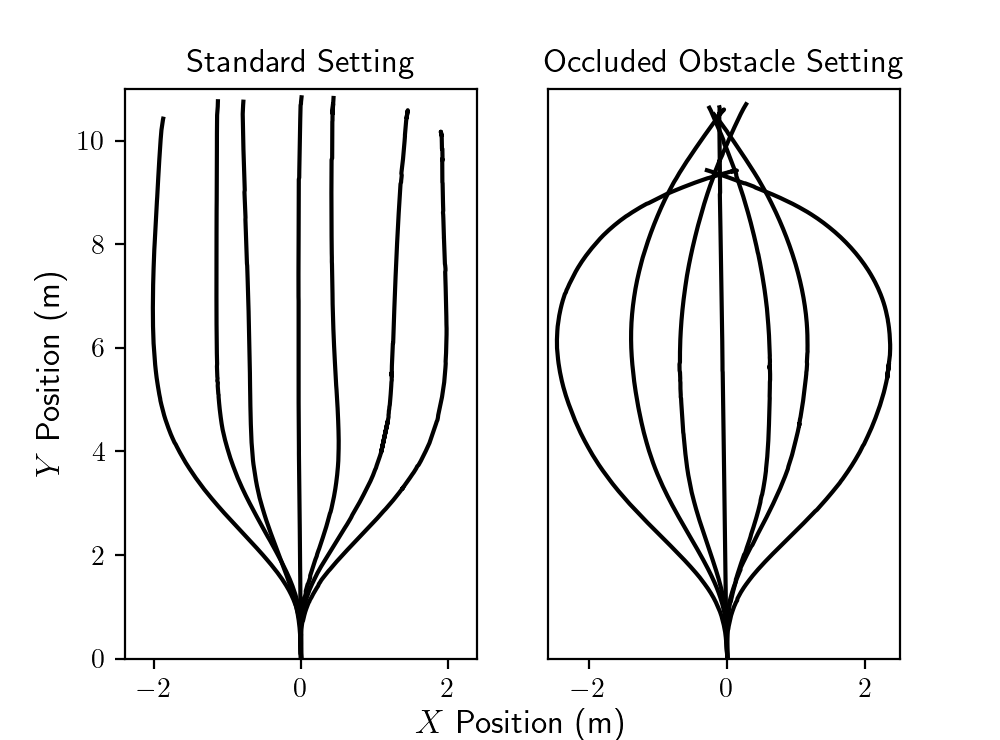}
\end{center}
\vspace{-8pt}
\caption{(Left) Representative motion primitives for the policy used in the standard setting in the navigation task. (Right) Representative motion primitives from the policy used in the occluded obstacle settings. }
\label{fig:trajectories}
\vspace{-13pt}
\end{figure}

\begin{figure}
\centering
\includegraphics[width=0.40\textwidth]{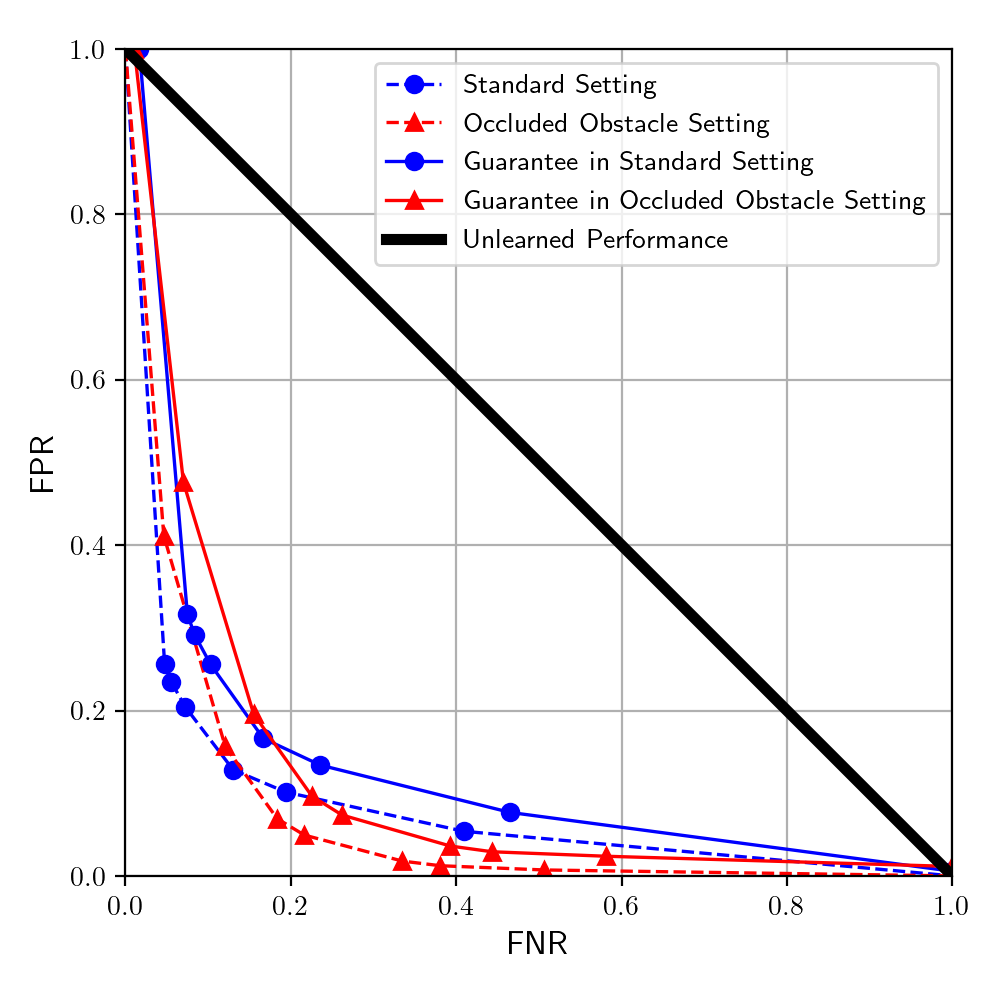}
\vspace{-10pt}
\caption{The dashed lines show the resulting failure predictor's FNR and FPR when the importance of a false negative is varied. The solid lines and markers show the associated PAC-Bayes guarantee on the FNR and FPR using the class-condition guarantees introduced in Sec.~\ref{subsecApp:Class}.}
\label{fig:navigation_fprvfnr}
\vspace{-15pt}
\end{figure}

\textbf{Failure Predictor and Training.} The failure predictor consists of a DNN which takes as input depth images from the perspective of the drone. The network receives a new image at a frequency of 20Hz and stacks the four most recent images as input to predict failure or no failure at some point in the future (i.e. 2-classes), with a Softmax layer at the end. If at any time step, the failure class is largest, we say it is a prediction of failure; otherwise, it is a prediction of success. We use 10,000 training environments to train the prior distribution $\D_{\F_0}$ over failure predictors and an additional 10,000 training environments to optimize the PAC-Bayes upper bound in \eqref{eq:pacbayes_bound}. We use Algorithm~\ref{alg:backprop} to obtain a posterior distribution $\D_\F$ and the associated PAC-Bayes generalization guarantee. We note that the physical drone takes approximately $0.5s$ to slow to a stop; thus, the failure predictor needs to predict collisions 10 time-steps ahead (since the frequency of the input is 20Hz).

After training, we compute the bound on the misclassification error with \eqref{eq:pacbayes_bound}, in addition to the bounds on the FNR and FPR with \eqref{Eq:TheoremThree}. At test time, we sample a failure predictor $f \sim \D_\F$ and fix it for a particular test environment. 

\textbf{Simulation Results.} We first verify our failure predictors in simulation before testing them on the hardware setup. As shown in Sec.~\ref{subsecApp:Toy}, there is an intrinsic trade-off between the false-positive and false-negative rates in general. To validate our theoretical framework, we plot the bounds on the FNR and FPR given by \eqref{Eq:TheoremThree} (solid lines in Fig.~\ref{fig:navigation_fprvfnr}). We compare these bounds with the empirical FNR and FPR (estimated with 20000 held out environments) of the predictor with varied false-negative importance during training; these are plotted using dashed lines in Fig.~\ref{fig:navigation_fprvfnr}. 
We obtain strong bounds on the failure predictors' errors, with closely-matching empirical performance along all points along the curve. 

% Trials: Policy 1
% 15: 0 FN, 1 FP, 12 TN, 2 TP
% Trials: Policy 2
% 15: 0 FN, 2 FP, 4 TN, 9 TP
\textbf{Hardware Results.} 
We select a single predictor for each of the settings (standard and occluded) to use in the hardware experiments. We choose the failure predictors with the tightest guaranteed total error rate (i.e. tightest upper bound from \eqref{eq:pacbayes_bound}). When the failure predictor stops the rollout due to a prediction of failure, we re-run the trial without the failure predictor to determine the true label. \ifarxiv 
See Fig.~\ref{fig:anchor} for an example of the obstacle placement for the occluded obstacle setting. Additionally, a video with representative trials from both settings is available at \href{https://youtu.be/z4UwQzTjhqo}{\texttt{https://youtu.be/z4UwQzTjhqo}}.
\else See Fig.~\ref{fig:anchor} for an example of the obstacle placement for the occluded obstacle setting and the supplemental video for representative trials from both settings. \fi
We run 15 trials in each of the settings and show the misclassification error of the failure predictors along with the guarantee on the misclassification in Table \ref{tab:navigation-result}. In the standard setting, $p_{1 \cap 1}=\frac{2}{15},\ p_{0 \cap 0}=\frac{12}{15},\  p_{1 \cap 0}=\frac{1}{15},\ p_{0 \cap 1}=0$, and in the occluded obstacle setting $p_{1 \cap 1}=\frac{9}{15}, \ p_{0 \cap 0}=\frac{4}{15}, \ p_{1 \cap 0}=\frac{2}{15}, \ p_{0 \cap 1}=0$. The misclassification bounds are validated by results from both the standard and occluded obstacle settings. Additionally, in all trials where the failure predictor was used, the drone never crashed due to a missed failure (i.e., there were no false negatives). These experiments demonstrate the power of the failure predictor to maintain the robot's safety in novel and even adversarial settings. 

\begin{table}[t]
\footnotesize
    \centering
    \caption{Results for Failure Prediction on Navigation Task}
    \begin{tabular}{ccc}
        \toprule
        Setting & Standard & Occluded Obstacle \\
        \midrule
        True Expected Failure (Sim)             & 0.253 & 0.514 \\ \midrule
        Misclassification Bound                  & 0.128 & 0.154  \\ 
        True Expected Misclassification (Sim)    & 0.101 & 0.125 \\
        % True Expected Failure (Real)           & 0.133 & 0.600 \\
        True Expected Misclassification (Real)  & 0.067 & 0.133 \\
        % FPR (Real) & 0 & 0 \\
        % FNR (Real) & 0.333 & 0.181 \\ 
        % \midrule
        % FNR Bound                         & 0.292 & 0.097  \\
        % True Expected FNR                 & 0.235 & 0.069  \\ \midrule
        % FPR Bound                         & 0.084 & 0.225  \\
        % True Expected FPR                 & 0.055 & 0.184  \\
        \bottomrule
    \end{tabular}
    \label{tab:navigation-result}
    \vspace{-10pt}
\end{table}

\subsection{Grasping Mugs with a Robot Arm}
\label{subsecER:Grasp}

\textbf{Overview.} In this example, we consider a robot arm performing the task of grasping a mug (Fig.~\ref{fig:grasp-envs}). There are three mugs on the table and the arm needs to lift one of them off the table. We use the PyBullet simulator \cite{coumans2021} for training the policy and failure predictor, and then test these on a hardware setup with a Franka Panda 7-DOF arm equipped with a wrist-mounted Intel RealSense D435 camera (Fig.~\ref{fig:grasp-envs}). Failure is defined as either (1) the arm failing to lift the mug from the table or (2) the gripper contacting the mug with excessive force before grasping (e.g., if the bottom of the gripper finger hits the rim of the mug). In simulation, we check the force readings at the end-effector joint of the arm, and set the threshold to be $20$N for (2). We use the same threshold on the physical arm. 

\textbf{Policy and Environment Distributions.} 
We consider a multi-view grasping setting: with a fixed horizon of five steps, the robot starts at a fixed pose and gradually moves towards the mug. The gripper always closes at the last step to grasp the mug. The policy takes a $100 \times 150$ pixels RGB-D image from the camera and outputs the desired pose relative to the current pose, $(\Delta_x, \Delta_y, \Delta_z, \Delta_\psi)$, in 3D translation and yaw orientation. The relative pose is normalized between $[-0.02,0.02]$cm for $\Delta_x$ and $\Delta_y$, $[-0.05,-0.03]$cm for $\Delta_z$, and $[-15,15]^\circ$ for $\Delta_\psi$. To simulate more realistic camera images, we add pixel-wise Gaussian noise to the depth image, and randomly change the color values for $5\%$ of the pixels in the RGB image. In order to evaluate the failure predictor on policies with different task success rates, we choose three different policies saved at different epochs during training. Shown in Table~\ref{tab:grasp-result}, the policies (`High', `Medium', and `Low' settings) achieve $11.8\%$, $22.1\%$, and $46.6\%$ failure rates respectively on test environments in simulation.

In order to create different environments for the robot, we obtained 50 mugs of diverse geometries from the ShapeNet dataset \cite{wu20153d}. We then randomly scale them to different diameters between $[8, 12]$cm. For each environment, we sample three different mugs and their initial locations on the table (while ensuring that the mugs do not intersect). We randomly assign a RGB value as the color of the mug.

\textbf{Failure Predictor and Training.} Similar to the drone example, the predictor is parameterized with a DNN that takes the RGB-D image from the camera and outputs the probability of success and failure with a (two-class) Softmax layer at the end. If the predicted probability of failure is higher than $0.5$, the predictor outputs $\hat{y}=1$. Since the gripper does not reach the mug in the first two out of five steps of the rollout, we only train the failure predictor using data from the last three steps of each rollout. We use 5000 training environments to train the prior distribution $\D_{\F,0}$ over failure predictors and an additional 5000 training environments to optimize the PAC-Bayes upper bound and provide a posterior $\D_\F$. 

\begin{figure}
\centering
\includegraphics[width=0.40\textwidth]{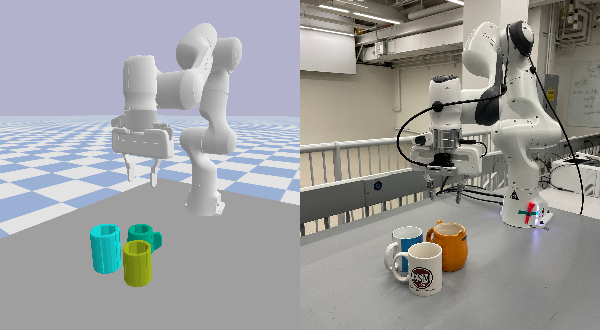}
\caption{(a) Simulation environment in PyBullet simulator (virtual wrist-mounted camera not shown). (b) Real environment with an arm and a wrist-mounted camera for grasping task.}
\label{fig:grasp-envs}
\vspace{-10pt}
\end{figure}

\begin{figure}
\centering
\includegraphics[width=0.45\textwidth]{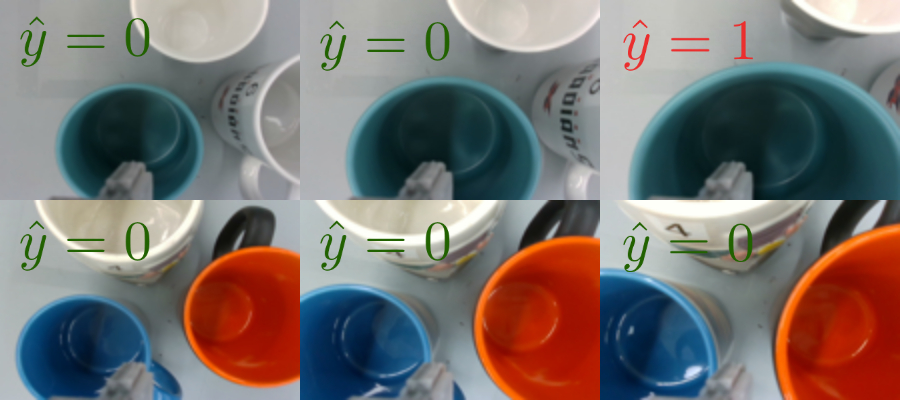}
\caption{Camera observation (depth not shown) at the last three steps in a trial in the grasping task on the hardware platform. (Top) A true-positive trial, where the predictor outputs failure at the last step, and then the right finger of the gripper hits the mug rim (not shown); (bottom) a true-negative trial.}
\label{fig:grasp-trajs}
\vspace{-10pt}
\end{figure}

\textbf{Simulation Results.} 
Generalization bounds and test prediction failure rates of the grasping example are shown in Table~\ref{tab:grasp-result}. Across all three settings, we achieve tight guarantees on failure prediction compared to the true expected failure rate of the policies. When testing on additional 5000 environments in simulation (`Sim'), the expected prediction rate validates the bound in all settings. We also plot different levels of bounds on FNR and FPR by varying the importance of false negatives relative to that of false positive in the `Medium' setting (see Fig.~\ref{fig:grasp-bound} in Appendix \ref{subsec:expdetails}). Similar to Fig.~\ref{fig:navigation_fprvfnr} in the drone example, we achieve strong bounds on conditional misclassification errors and the empirical FNR and FPR match the bounds.

\begin{table}[!ht]
\footnotesize
    \centering
    \caption{Results of Failure Prediction in Grasping Task}
    \begin{tabular}{ccccc}
        \toprule
        Setting && High & Medium & Low \\
        \midrule
        True Expected Failure (Sim) && 0.124 & 0.223 & 0.485 \\
        \midrule
        Misclassification Bound && 0.087 & 0.130 & 0.142 \\
        True Expected Misclassification (Sim) && 0.058 & 0.101 & 0.118 \\
        True Expected Misclassification (Real) && 0.067 & 0.133 & 0.133 \\
        \bottomrule
    \end{tabular}
    \label{tab:grasp-result}
    \vspace{-10pt}
\end{table}

\textbf{Hardware Results.} We also test the failure predictor on the hardware setup (Fig.~\ref{fig:grasp-envs}). We collect 18 real mugs of diverse geometries and visuals and split them into 6 sets. For each set, we perform 5 experiments with random initial mug poses and randomly sampled failure predictor from the posterior distribution. In order to check if the prediction is correct, we continue the trial even when the predictor detects failure. Fig.~\ref{fig:grasp-trajs} shows the RGB images from the camera and failure predictions at the last three steps of two trials. \ifarxiv A video with representative trials is available at \href{https://youtu.be/z4UwQzTjhqo}{\texttt{https://youtu.be/z4UwQzTjhqo}}. \else Please see the supplement video for representative trials. \fi

The results of 30 trials for each setting are shown in Table~\ref{tab:grasp-result} (Real). In the `High' setting, $p_{1 \cap 1}=\frac{4}{30},\ p_{0 \cap 0}=\frac{24}{30},\  p_{1 \cap 0}=\frac{2}{30},\ p_{0 \cap 1}=0$; in the `Medium' setting $p_{1 \cap 1}=\frac{6}{30}, \ p_{0 \cap 0}=\frac{20}{30}, \ p_{1 \cap 0}=\frac{3}{30}, \ p_{0 \cap 1}=\frac{1}{30}$; and in the `Low' setting $p_{1 \cap 1}=\frac{11}{30}, \ p_{0 \cap 0}=\frac{15}{30}, \ p_{1 \cap 0}=\frac{3}{30}, \ p_{0 \cap 1}=\frac{1}{30}$. In both `High' and `Low' settings, the empirical results validate the corresponding misclassification bounds. In the `Medium' setting, the empirical rate is slightly higher than the bound. This could be due to bias from the limited number of real experiment trials and disparities between camera observations in simulation and in real. Nonetheless, the hardware results overall demonstrates the tightness of the theoretical guarantees on failure prediction.

\section{Conclusion} 
\label{sec:conclusion}

In this work we propose an approach for training a failure predictor with rigorous guarantees on error rates by leveraging PAC-Bayes generalization theory. We develop novel class-conditional bounds that allow us to trade-off the relative rates of false-positive and false-negative errors. We present an algorithmic approach that parameterizes the predictor using a neural network and minimizes the theoretical error bounds. Through extensive experiments in vision-based drone navigation and robot grasping tasks in both simulation and on real hardware, we demonstrate strong guarantees on failure prediction rates.

{\bf Future work.} Our approach assumes that environments that the failure predictor will be deployed in are drawn from the same underlying (but unknown) distribution from which training environments are generated. Extending our approach to perform failure prediction in \emph{out-of-distribution} environments (e.g., using \emph{distributionally robust} bounds) would be of significant interest. In addition, the results we present in this paper are for relatively short-horizon tasks; we are interested to test our approach on longer-horizon problems (e.g., using transformers or long short-term memory models).
We also note that in the settings we test, the black-box policies have as high as 87.5\% success rate. However, a limitation of our approach, and of statistical approaches in general, is developing meaningful guarantees in settings where the policy has a significantly higher (e.g., 99.9\%) success rate. Scaling to such settings where failures are extremely rare may require a large number of training samples without further assumptions about the policy or training data; this is a good direction for future work. Finally, the use of more sophisticated network architectures (e.g., those based on attention models) could have an impact on the quality of the bounds and empirical performance. In particular, attention models could learn to focus on particular portions of the robot's observations that are most predictive of failures.

\ifarxiv
\section*{Acknowledgments}
\label{sec:Ack}
\noindent The authors were partially supported by the NSF CAREER Award [\#2044149], the Office of Naval Research [N00014- 21-1-2803,  N00014-18-1-2873], the Toyota Research Institute (TRI), the NSF Graduate Research Fellowship Program [DGE-2039656], and the School
of Engineering and Applied Science at Princeton University through the generosity of William Addy ’82. This article solely reflects the opinions and conclusions of its authors and not NSF, ONR, Princeton SEAS, TRI or any other Toyota entity.
\fi

\bibliographystyle{unsrtnat}
% \begin{small}
\bibliography{main}
% \end{small}

% \newpage{}
% \text{ } 
% \newpage{}
% \newpage{}
\begin{appendices}
% \section{Appendix}\label{Sec:Appendix}
\section{Class-Conditional Bound Proof} \label{SubsecAppendix:CCB}

\begin{proof}[\textbf{Proof of Theorem 3}]

\begin{lemma}\label{BernsteinLemma}
(Bernstein Inequality for Bernoulli Variables)

For any $\delta \in (0, 1)$, $p_0 \in (0, 1)$ and $p_1 = 1-p_0$, with probability greater than $1-\delta$ over samples $S$ of size $N$ of i.i.d. Bernoulli random variables, the following inequalities hold for $\underline{p}_0, \underline{p}_1$ and ratios $\underline{K}_i$: 

\begin{equation}
    \begin{split}
        \underline{K}_i & = \frac{3}{5}\Big[\sqrt{\frac{N\underline{p}_i}{2\log{\frac{2}{\delta}}}}\Big], \\
        p_i & \geq \hat{p}_i - \frac{1}{\underline{K}_i + 1}\hat{p}_i.
    \end{split}
\end{equation}
\end{lemma}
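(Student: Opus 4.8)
The plan is to apply Bernstein's inequality separately to each of the two classes and then combine the two events with a union bound. Fix a class $i\in\{0,1\}$ and note that the membership indicators $\mathbbm{1}[\text{sample }j\text{ is in class }i]$, $j=1,\dots,N$, are i.i.d.\ Bernoulli variables with mean $p_i$ and variance $p_i(1-p_i)\le p_i$, and $\hat p_i$ is their empirical average. Bernstein's inequality controls the relevant (upper) tail: $\PP[\hat p_i - p_i \ge t] \le \exp(-Nt^2/(2p_i(1-p_i)+\tfrac23 t))$. Upper-bounding the variance by $p_i$ and rewriting the deviation multiplicatively (substitute $t=\gamma p_i$) yields $\PP[\hat p_i \ge (1+\gamma)p_i] \le \exp(-c\,\gamma^2 N p_i/(1+\gamma))$ for a numerical constant $c$. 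Choosing $\gamma$ so that the right-hand side equals $\delta/2$ gives, with probability at least $1-\delta/2$, that $p_i \ge \hat p_i/(1+\gamma)$ with $\gamma$ of order $\sqrt{\log(2/\delta)/(Np_i)}$; writing $K=1/\gamma$ this is exactly $p_i \ge \hat p_i - \tfrac{1}{K+1}\hat p_i$ with $K$ of the stated form $\tfrac35\sqrt{Np_i/(2\log(2/\delta))}$, where the constants $\tfrac35$ and $2$ are chosen so that this clean closed form stays on the conservative side of the exact Bernstein expression (absorbing the lower-order $\tfrac23 t$ term).

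The only genuine subtlety is that the deviation just obtained is stated in terms of the \emph{unknown} $p_i$ (which sits both in $\gamma$ and inside $K$), whereas the lemma states it self-referentially in terms of $\underline p_i$. I would close this gap by a monotonicity argument. Define $\psi(x) := \hat p_i\cdot\tfrac{K(x)}{K(x)+1}$ with $K(x)=\tfrac35\sqrt{Nx/(2\log(2/\delta))}$; one checks that $\psi$ is increasing and concave on $[0,\hat p_i]$, with $\psi(0)=0$ and $\psi(\hat p_i)<\hat p_i$. Hence $\psi(x)-x$ is zero at $x=0$, strictly positive for small $x>0$, and (being concave minus linear) crosses zero exactly once more, at some $\underline p_i\in(0,\hat p_i)$; take this crossing as the definition of $\underline p_i$, and $\underline K_i:=K(\underline p_i)$. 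On the good event we have $p_i > \psi(p_i)$, so if $p_i<\underline p_i$ held, then—since $\psi(x)>x$ throughout $(0,\underline p_i)$—we would get $p_i<\psi(p_i)$, a contradiction. Therefore $p_i\ge\underline p_i$ on the good event, which is the claimed inequality. (The degenerate case $\hat p_i=0$ is trivial, since then $\underline p_i=0\le p_i$.)

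Finally, applying this to $i=0$ and $i=1$, each with failure probability at most $\delta/2$, and taking a union bound shows that $p_0\ge\underline p_0$ and $p_1\ge\underline p_1$ hold simultaneously with probability at least $1-\delta$. I expect the main obstacle to be not the concentration step—a direct invocation of Bernstein—but the bookkeeping around the implicit definition of $\underline p_i$: verifying that the fixed-point equation relating $\underline p_i$ and $\underline K_i$ is well posed, that the chosen constants keep the closed-form deviation conservative relative to the exact bound, and that the monotonicity/concavity argument legitimately converts the $p_i$-dependent statement into the stated $\underline p_i$-dependent one.
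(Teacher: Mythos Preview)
Your proposal is correct and follows essentially the same skeleton as the paper's proof: apply Bernstein's one-sided tail to each class, rewrite the deviation in multiplicative form $\hat p_i\le(1+1/K_i)p_i$ with $K_i$ of the stated order, invert to obtain an empirically computable lower bound $\underline p_i$, and conclude with a union bound over $i\in\{0,1\}$. The only noteworthy difference is in how the inversion from the $p_i$-dependent bound to the $\underline p_i$-dependent statement is carried out. The paper squares the Bernstein deviation (using $\sigma^2=p_i(1-p_i)$) to obtain a scalar quadratic in $p_i$ and takes its minimal root as $\underline p_i$; the $K_i$ formula is then read off separately from the multiplicative form. You instead define $\underline p_i$ implicitly as the unique nonzero fixed point of $\psi(x)=\hat p_i\,K(x)/(K(x)+1)$ and use concavity of $\psi$ to argue that $p_i\ge\psi(p_i)$ forces $p_i\ge\underline p_i$. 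Your route is a bit more careful about the self-referential nature of $\underline K_i=K(\underline p_i)$ in the lemma's statement, whereas the paper's quadratic solution is more explicit and directly computable; both arrive at the same conclusion.
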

\begin{proof}
For the empirical mean of Bernoulli random variables, the Bernstein inequality allows us to show directly that 
\begin{equation*}
    \mathbb{P}\Big[\hat{p}_i - p_i \geq \frac{10}{3}\sqrt{\frac{\sigma^2 \log{\frac{2}{\delta}}}{N}}\Big] \leq \frac{\delta}{2}.
\end{equation*}
Defining $K_{\delta N} = \frac{100 \log{\frac{2}{\delta}}}{9}$, the above term inside the brackets simplifies to \begin{equation*}
    p_i^2(1 + K_{\delta N}) - (2\hat{p}_i + K_{\delta N})p_i + \hat{p}_i^2 \leq 0.
\end{equation*}
This is a scalar quadratic equation in $p_i$; taking the minimal solution gives $\underline{p}_i$. Now, analyze the multiplicative bound 
\begin{equation*}
    \frac{10}{3}\sqrt{\frac{\sigma^2 \log{\frac{2}{\delta}}}{N}} \leq \frac{p_i}{K_i}.
\end{equation*}
This bound implies 
\begin{equation*}
    \begin{split}
        K_i & \leq \frac{3}{5}\sqrt{\frac{N p_i}{2\log{\frac{2}{\delta}}}}, \\
        N & \geq \frac{50 K_i^2 \log{\frac{2}{\delta}}}{9p_i}.
    \end{split}
\end{equation*}
To be practically feasible, we must be able to bound $p_i$ away from $0$. This is equivalent to achieving $K_i > 1$. Of course, observing a few instances (or even a single instance) of class $i$ is sufficient to ensure this condition is met for a Bernoulli variable. Assuming this holds, we have that 
\begin{equation*}
    \begin{split}
        \hat{p}_i - p_i & \leq \frac{p_i}{K_i}, \\
        \implies \hat{p}_i & \leq p_i (1 + \frac{1}{K_i}), \\
        \implies p_i & \geq \frac{K_i}{K_i+1}\hat{p}_i, \\
        & = \hat{p}_i - \frac{1}{K_i+1}\hat{p}_i \triangleq \underline{p}_i.
    \end{split}
\end{equation*}
\end{proof}
Using the error functions, we can rearrange to show that the structure is maintained. Specifically, 
\begin{equation*}
    \begin{split}
        \hat{C}(r_f(E, \pi),S) & = \sum_{i=0}^1 \frac{\lambda_i}{\hat{p}_i} \mathbbm{1}[f(E) \neq y], \\
        C_\lambda & \triangleq \frac{\lambda_0}{\underline{p}_0} + \frac{\lambda_1}{\underline{p}_1}, \\
        \tilde{C}(r_f(E, \pi)) & = \frac{1}{C_\lambda} \sum_{i=0}^1 \frac{\lambda_i}{\underline{p}_i} \mathbbm{1}[f(E)\neq y], \\
        & = \sum_{i=0}^1 \lambda'_i \mathbbm{1}[f(E)\neq y]. \\
    \end{split}
\end{equation*}
Because $\tilde{C}$ is a valid error function, we can apply the PAC-Bayes framework. Defining for notational shorthand 
\begin{equation*}
    R_{PAC} \triangleq C_\lambda \sqrt{\frac{\KL(\D_\F \rVert \D_{\F,0}) + \log(\frac{2\sqrt{N}}{\delta})}{2N}},
\end{equation*}
we see that under standard PAC-Bayes, we can achieve the following bound: 
\begin{equation*}
    \underset{E \sim \D_\E}{\EE}\underset{f \sim \D_\F}{\EE}  \big[\tilde{C}(r_f(E, \pi)) \big] \leq \frac{1}{N}\sum_{E \in S}^N \tilde{C}(r_f(E, \pi))+ R_{PAC}.
\end{equation*}
This implies then that we can bound $\hat{C}$ as follows: 
\begin{equation*}
    \begin{split}
        \underset{E \sim \D_\E}{\EE}\underset{f \sim \D_\F}{\EE} & \big[\hat{C}(r_f(E, \pi),S) \big] \leq \underset{E \sim \D_\E}{\EE}\underset{f \sim \D_\F}{\EE} \big[C_\lambda \tilde{C}(r_f(E, \pi))\big], \\
        & \leq C_\lambda \frac{1}{N} \sum_{i=1}^N \tilde{C}(r_f(E, \pi)) + R_{PAC}, \\
        & \leq \frac{1}{N} \sum_{i=1}^N \hat{C}(r_f(E, \pi),S)(1 + \frac{1}{\underline{K}_{min}}) + R_{PAC}.
    \end{split}
\end{equation*}
The first and last inequalities follow from Lemma \ref{BernsteinLemma} and the fact that $\underline{p}_i < \hat{p}_i$. Specifically, the latter fact implies that $\hat{C} \leq C_\lambda \tilde{C}(\cdot)$ (in words: $C_\lambda \tilde{C}$ is an over-approximation), while the first fact implies that the over-approximation ratio is less than $1 + 1/K$ (i.e. is relatively tight). Defining $\underline{p} = \min_{i} \underline{p}_i$ and $\underline{K} = \min_{i} \underline{K}_i$, and noting that the empirical mean error must be less than $1/2$ by definition of $\lambda$ (that is, there exists a policy that achieves $\min\{\lambda, 1-\lambda\}$ cost by simply always guessing the appropriate class), we have that (conservatively): 
\begin{equation*}
    \begin{split}
        \underset{E \sim \D_\E}{\EE} & \underset{f \sim \D_\F}{\EE} \big[\hat{C}(r_f(E, \pi),S) \big] \leq \underset{E \sim \D_\E}{\EE}\underset{f \sim \D_\F}{\EE} \big[C_\lambda \tilde{C}(r_f(E, \pi)) \big], \\
        & \leq \frac{1}{N} \sum_{i=1}^N \hat{C}(r_f(E, \pi),S)(1 + \frac{1}{\underline{K}}) + R_{PAC}, \\
        & \leq \frac{1}{N} \sum_{i=1}^N \hat{C}(r_f(E, \pi),S) + \frac{1}{2\underline{K}} + R_{PAC}, \\
        & = \frac{1}{N} \sum_{i=1}^N \hat{C}(r_f(E, \pi),S) + \frac{5}{3}\sqrt{\frac{(1-\underline{p})\log{\frac{2}{\delta}}}{N\underline{p}}} + R_{PAC}.
    \end{split}
\end{equation*}
This matches the result of \eqref{Eq:TheoremThree} stated in Theorem \ref{ClassTheorem}, and completes the proof. 
\end{proof}
% \newpage{}
\section{Toy Example Calculations}
\label{SubsecAppendix:Toy}

Let $w_i = x_i + z_i$. Then for each training sample, the true class label is defined by $y_i = \mathbbm{1}[w_i \geq c]$. The optimal classifier in expectation is $\hat{y}^*_i = f^*(x_i) = \mathbbm{1}[x_i \geq c]$. It can be shown that the probability density for $W = X + Z$ is
\begin{equation}
    p(w) = 0.5 - 0.25\lvert w \rvert \text{, } w \in [-2, 2].
\end{equation}
Further, the probability of a new example being in classes 0 and 1 as a function of the cutoff $c \in [-2, 0]$ is 
\begin{equation}
\begin{split}
    p_0(c) & = \frac{(c+2)^2}{8}, \\
    p_1(c) & = 1 - p_0(c).
    \end{split}
\end{equation}
By symmetry, for $c \in [0, 2]$ the same probabilities hold but for the opposite quantities. We note that the optimal policy, based only on the observable $x_i$, is invariant for $\lvert c \rvert \geq 1$. As such, we will generally restrict our attention to $c \in [-1, 1]$.

Cap notation will again note the intersection of two events, which will be in the order $\{$predicted class, true class$\}$. I.e. $p_{1 \cap 0}$ denotes the probability that the predictor chooses class 1, but that the example is truly in class 0.

We now calculate $p_{1 \cap 0}$ and $p_{0 \cap 1}$ as a function of $c \in [-1, 0]$ (we will argue from symmetry that the results extend naturally to $c \in [0, 1]$, and argue by the optimal policy invariance for large-magnitude $c$ that the extension is trivial to $\lvert c \rvert > 1$). For clarity, because $X \sim \mathcal{U}[-1, 1]$, $d\mu_x = (1/2)dx$; this substitution will be very common throughout the following analysis: 
\begin{equation*}
    \begin{split}
        p_{1 \cap 0} & = p(\text{predict 1 } \cap \text{ truly 0}), \\
        & = p((x_i \geq c) \cap (w_i < c)), \\
        & = \int_c^{c+1} \frac{1 - (x-c)}{2} d\mu_x, \\
        & = \frac{1}{4} \int_c^{c+1} (1 + c - x) dx, \\
        & = \frac{1}{4} (1+c)x - \frac{x^2}{2} \rvert_c^{c+1}, \\
        & = \frac{1}{4} [(1+c) - (\frac{c^2 + 2c +1 -c^2}{2})], \\
        & = \frac{1}{8}.
    \end{split}
\end{equation*}
We have that for $c \in [-1, 0]$:
\begin{equation*}
    \begin{split}
        p_{1 \rvert 0}p_0 & = p_{1 \cap 0}, \\
        \implies p_{1 \rvert 0}(c) & = \frac{1/8}{(c+2)^2/8}, \\
        & = \frac{1}{(c+2)^2}, \\
        \implies p_{1 \rvert 0}(-1) & = 1, \\
        p_{1 \rvert 0}(0) & = \frac{1}{4}.
    \end{split}
\end{equation*} 
Similarly, again for $c \in [-1, 0]$:
\begin{equation*}
    \begin{split}
        p_{0 \cap 1} & = p(\text{predict 0 } \cap \text{ truly 1}), \\
        & = p((x_i < c) \cap (w_i > c)), \\
        & = \int_{-1}^c (1/2) - (1/2)(c-x) d\mu_x, \\
        & = \frac{1}{4} \int_{-1}^c (1 - c + x) dx, \\
        & = \frac{1}{4}\big[ (1-c)x + \frac{x^2}{2} \big] \rvert_{-1}^c, \\
        & = \frac{1}{4} \big[ (1-c)(c+1) + (c^2/2 - 1/2) \big], \\
        & = \frac{1}{8} \big[ 1 - c^2 \big].
    \end{split}
\end{equation*}
Which leads to the expression:
\begin{equation*}
    \begin{split}
        p_{0 \rvert 1}p_1 & = p_{0 \cap 1}, \\
        \implies p_{0 \rvert 1}(c) & = \frac{\frac{1}{8}(1-c^2)}{\frac{1}{8}(8-(c+2)^2)}, \\
        & = \frac{(1-c^2)}{(8-(c+2)^2)}, \\
        \implies p_{0 \rvert 1}(-1) & = \frac{0}{7} = 0, \\
        p_{0 \rvert 1}(0) & = \frac{1}{4}.
    \end{split}
\end{equation*} 

The total error is: 
\begin{equation*}
\begin{split}
    p_{err} & = p_{0 \cap 1} + p_{1 \cap 0}, \\
            & = \frac{1}{8} - \frac{c^2}{8} + \frac{1}{8}, \\
            & = \frac{1}{4} - \frac{1}{8}c^2.
    \end{split}
\end{equation*}
Finally, we calculate the derivative of the curve $(p_{1\rvert 0}(c), p_{0 \rvert 1}(c))$ parameterized by $c$:
\begin{equation*}
\begin{split}
    \frac{d (p_{1 \rvert 0})}{dc} & = \frac{-2}{(c+2)^3}, \\
     \frac{d (p_{0 \rvert 1})}{dc} & = \frac{(8-(c+2)^2)(-2c) - (1-c^2)(-2)(c+2)}{(8-(c+2)^2)^2}, \\
    & = \frac{4c^2 - 6c + 4}{(8-(c+2)^2)^2},
\end{split}
\end{equation*} \vspace{-.5em}
\begin{equation*}
\begin{split}
    \implies \frac{d (p_{0 \rvert 1})}{d (p_{1 \rvert 0})}(c) & = \frac{\frac{4c^2 - 6c + 4}{(8-(c+2)^2)^2}}{\frac{-2}{(c+2)^3}}, \\
    & = \frac{-(c+2)^3 (2c^2 - 3c + 2)}{(8-(c+2)^2)^2}, \\
    \implies \frac{d (p_{0 \rvert 1})}{d (p_{1 \rvert 0})}(-1) & = -\frac{1}{7}, \\
    \frac{d (p_{0 \rvert 1})}{d (p_{1 \rvert 0})}(0) & = -1.
\end{split}
\end{equation*}

\section{Additional Implementation Details}
\label{subsec:expdetails}

\begin{algorithm}[t]
    \caption{Failure Predictor Training}
    \label{alg:backprop}
\begin{algorithmic}
    \State \textbf{Input}: Fixed prior distribution $\N_{\psi_0}$ over policies
    \State \textbf{Input}: Rollout function $r$, learning rate $\gamma$
    \State \textbf{Input}: Training dataset $S=\{E_i\}_{i=1}^N$
    \State \textbf{Output}: Optimized $\psi^*$
    \For{$i =\{1, 2, ..., N\}$}
    \State $\{(o_{i,j}, y_{i,j})\}_{j=1}^T \leftarrow r(E_i, \pi)$
    \EndFor
    \While{not converged}
    \State Sample $w \sim \N_\psi$
    \For{$i = \{1, 2, \dots, N\}$}
    \For{$j = \{1, 2, \dots, T\}$}
    \State $\hat{y}_{i,j} = f_w(\{o_{i,l}\}_{l=1}^j)$
    \EndFor
    \EndFor
    \State $B \leftarrow \frac{1}{N} \sum_{i=1}^N C_{s}(\{(\hat{y}_{i,j}, y_{i,j})\}_{j=1}^T)$ 
    \State \phantom{$B \leftarrow$} $+ \sqrt{\frac{\KL(\N_{\psi} \| \N_{\psi_0}) + \ln\frac{2\sqrt{N}}{\delta}}{2N}}$ 
    \State $\psi \leftarrow \psi - \gamma \nabla_{\psi} B$ 
    \EndWhile
\end{algorithmic}
\end{algorithm}

In Sec.~\ref{sec:Experiments}, the failure predictors are parameterized with deep neural networks (DNNs) and trained using stochastic gradient descent. We assume access to an additional rollout function $r: \E \times \Pi \rightarrow \mathcal{O}^{T} \times \mathcal{Y}^{T}$ which maps a given policy and environment to a series of observations that the failure predictor will use and true labels $y$ about whether the policy has failed or not, for $t\in \{1, \dots, T\}$. We let the prior and posterior distributions over failure predictors be multivariate Gaussian functions and write them as $\D_{\F, 0}  = \mathcal{N}(\mu_0, \text{diag}(s_0))$, and $\D_\F = \mathcal{N}(\mu, \text{diag}(s))$. Define $\psi = (\mu, \log s)$ and let $\mathcal{N}_\psi := \mathcal{N}(\mu, \text{diag}(s))$. We can then sample a single set of network weights $w$ from the distribution over weights $\mathcal{N}_\psi$ and let the associated failure predictor be $f_w$. The upper bound presented in \eqref{eq:pacbayes_bound} has a potentially intractable expectation taken over $f \sim \D_\F$. We use the same technique presented in \cite{Dziugiate17} to minimize the upper bound by sampling unbiased estimates of the expectation over $f \sim \D_\F$. Additionally, the error $C(r_f(E, \pi))$ we present as part of the upper bound in \eqref{eq:pacbayes_bound} is discontinuous in general, and may be difficult to minimize directly. As such, we use a surrogate error function $C_{s}$ in training. We let $C_{s}$ take in a series of failure predictions from $f \sim \D_\F$ and the true labels at each point in the rollout to assign an error for that series of predictions. We use a surrogate error similar to the cross entropy loss:
\begin{align}
\label{eq:surrogatecostfn}
    C_{s}(\{\hat{y}_j,y_j)\}_{j=1}^T) :=\frac{1}{T} \sum_{j=1}^T [& \omega (y_{\min(j+k, T)}\log (\hat{y}_j)) \\ 
    & + (1 - y_{\min(j+k, T)})\log (1 - \hat{y}_j)] \nonumber,
\end{align}
where $\omega$ represents a weight on the importance of a false negative as compared to a false positive and $k$ represents the number of time steps ahead that the failure predictor must detect a failure. This weight $\omega$ is analogous to $\lambda$ used in \eqref{Eq:TheoremThree} and provides a way to scale the importance of false negatives. We use this error function in training for our experiments and different values of $\omega$ and $k$. The resulting approach is presented in Algorithm~\ref{alg:backprop}. After training is complete, we compute the misclassification, FNR, and FPR guarantees using a sample convergence bound as in \cite{Dziugiate17}.

% \textbf{Additional Experimental Details}: 
% For training, we use the cost function \eqref{eq:surrogatecostfn} with varying weights $\omega$ on the cost of a false negative. 
% We set $k=10$ for all experiments in this setting; thus, the failure predictor must determine if there will be a failure $0.5s$ in advance (which corresponds to 10 depth images). 
% We note that the resulting FNR and FPR cannot be controlled exactly using $\omega$; we thus pick representative failure predictors after training is complete in order to show the trend in Fig.~\ref{fig:navigation_fprvfnr}. 

\vspace{20pt}
\begin{figure} % [b]
\begin{center}
    \includegraphics[width=0.35\textwidth]{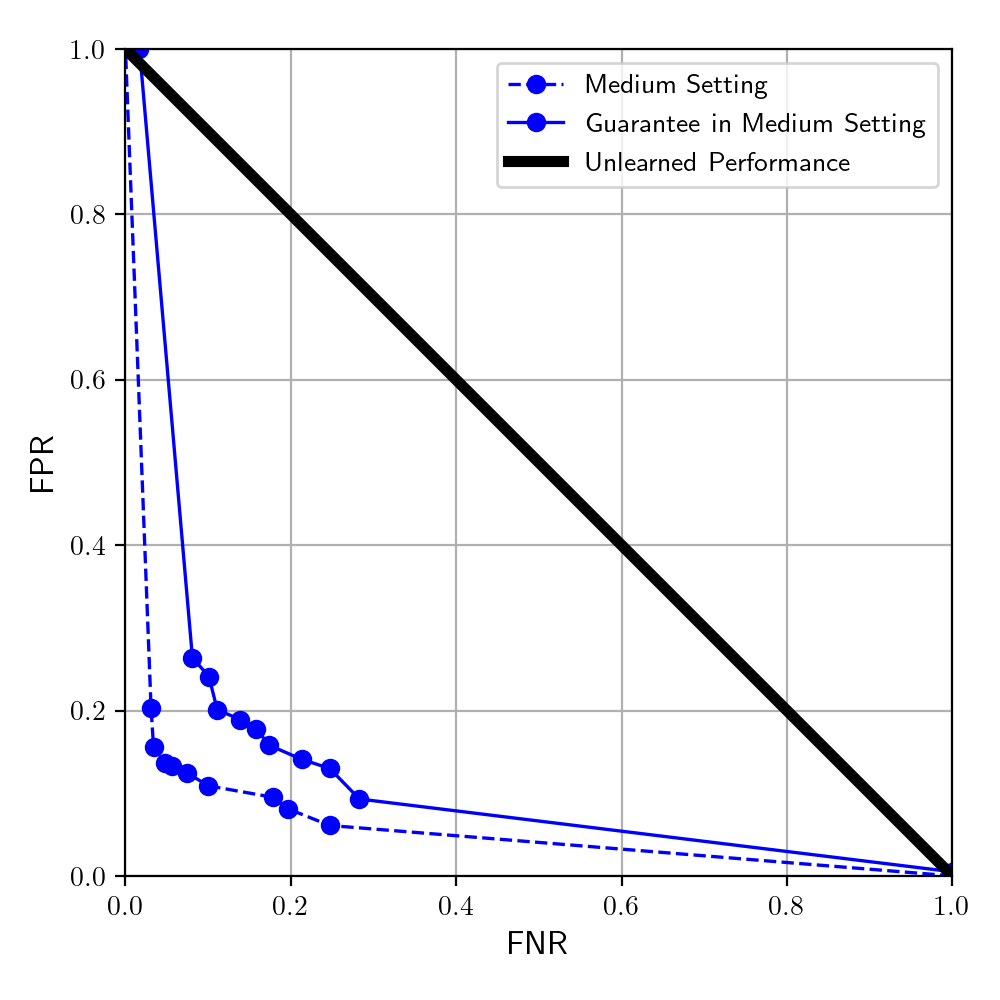}
\end{center}
\vspace{-15pt}
\caption{The dashed lines show the resulting failure predictor's FNR and FPR when the importance of a false negative is varied when training in the `Medium' setting in the grasping task. The solid lines and markers show the associated PAC-Bayes guarantee on the FNR and FPR using the class-condition guarantees introduction in Section \ref{subsecApp:Class}.}
\label{fig:grasp-bound}
\vspace{-15pt}
\end{figure}
\end{appendices}
\end{document}